\newif\ifRedact
    \let\cite\citep
\newsavebox\tmpbox
\newtheorem{theorem}{Theorem}[section]
\def\Plus{\texttt{+}}
\newcommand*\mystrut[1]{\vrule width0pt height0pt depth#1\relax}
\definecolor{lightblue}{RGB}{210,228,255}
\definecolor{darkblue}{RGB}{0,83,214}
\definecolor{lightred}{RGB}{230,168,118}
\definecolor{darkred}{RGB}{190,30,45}
\definecolor{tan}{RGB}{251,228,204}
  \newlength{\defbaselineskip}
\titlespacing{\section}{0pt}{*1}{*0}
\titlespacing{\subsection}{0pt}{*1.5}{*0}
\def\setstretch#1{\renewcommand{\baselinestretch}{#1}}
\title{Massively Scalable Inverse \\Reinforcement Learning in Google Maps}
  \renewcommand\AB@affilsepx{\quad \protect\Affilfont}
  \author[1]{Matt~Barnes\footnote{\textsuperscript{\dag} Equal contribution.
  Correspondence to \texttt{\{mattbarnes,mattea,mwulfmeier,obanion\}@google.com}.}}
  \author[1]{Matthew~Abueg\textsuperscript{\textasteriskcentered{}}}
  \author[2]{Oliver~F.~Lange}
  \author[2]{Matt~Deeds}
  \author[2]{\authorcr Jason~Trader}
  \author[1]{Denali~Molitor}
  \author[3]{Markus~Wulfmeier\textsuperscript{\dag}}
  \author[1]{Shawn~O'Banion\textsuperscript{\dag}}
  \affil[1]{Google Research}
  \affil[2]{Google Maps}
  \affil[3]{Google DeepMind}
\author{
  Matt Barnes\textsuperscript{1}\thanks{\dag Equal contribution. Correspondence to \texttt{\{mattbarnes,mattea,mwulfmeier,obanion\}@google.com}} \hspace{1em}
  Matthew Abueg\textsuperscript{1\textasteriskcentered{}} \hspace{1em}
  Oliver F. Lange\textsuperscript{2} \hspace{1em}
  Matt Deeds\textsuperscript{2}
  \\
  \textbf{
  Jason Trader\textsuperscript{2} \hspace{1em}
  Denali Molitor\textsuperscript{1} \hspace{1em}
  Markus Wulfmeier\textsuperscript{3\dag} \hspace{1em}
  Shawn O'Banion\textsuperscript{1\dag}
  }
  \\
  \textsuperscript{1}Google Research \hspace{1em} \textsuperscript{2}Google Maps \hspace{1em}
  \textsuperscript{3}Google DeepMind
}}
\begin{document}

\maketitle

\begin{abstract}
Inverse reinforcement learning (IRL) offers a powerful and general framework for learning humans' latent preferences in route recommendation, yet no approach has successfully addressed planetary-scale problems with hundreds of millions of states and demonstration trajectories. In this paper, we introduce scaling techniques based on graph compression, spatial parallelization, and improved initialization conditions inspired by a connection to eigenvector algorithms. We revisit classic IRL methods in the routing context, and make the key observation that there exists a trade-off between the use of cheap, deterministic planners and expensive yet robust stochastic policies. This insight is leveraged in Receding Horizon Inverse Planning (\textsc{rhip}), a new generalization of classic IRL algorithms that provides fine-grained control over performance trade-offs via its planning horizon. Our contributions culminate in a policy that achieves a 16-24\% improvement in route quality at a global scale, and to the best of our knowledge, represents the largest published study of IRL algorithms in a real-world setting to date. We conclude by conducting an ablation study of key components, presenting negative results from alternative eigenvalue solvers, and identifying opportunities to further improve scalability via IRL-specific batching strategies.
\end{abstract}

\section{Introduction}
Inverse reinforcement learning (IRL) is the problem of learning latent preferences from observed sequential decision making behavior. First proposed by Rudolf Kálmán in 1964 (when it went under the name of inverse optimal control \cite{Kalman1964}, and later structural estimation \cite{Rust1994}), IRL has now been studied in robotics \cite{Abbeel2008, Ratliff2009, Ratliff2007}, cognitive science \cite{Baker2009}, video games \cite{Tastan2011, Tucker2018}, human motion behavior \cite{Kitani2012, Rhinehart2020} and healthcare \cite{Imani2019, Yu2019}, among others.  

In this paper, we address a key challenge in all these applications: scalability \cite{Chan2021,Michini2013,Wulfmeier2016}. With several notable exceptions, IRL algorithms require solving an RL problem at every gradient step, in addition to performing standard backpropagation \cite{Finn2016Gan,Swamy2023}. This is a significant computational challenge, and necessitates access to both an interactive MDP and a dataset of expert demonstrations that are often costly to collect. By addressing the scalability issue, we aim to leverage recent advancements in training foundation-sized models on large datasets.

To illustrate our claims, we focus on the classic route finding task, due to its immediate practical significance and the availability of large demonstration datasets. Given an origin and destination location anywhere in the world, the goal is to provide routes that best reflect travelers' latent preferences. These preferences are only observed through their physical behavior, which implicitly trade-off factors including traffic conditions, distance, hills, safety, scenery, road conditions, etc. Although we primarily focus on route finding, the advancements in this paper are general enough to find use more broadly.

We address the scalability challenge by providing both (a) practical techniques to improve IRL scalability and (b) a new view on classic IRL algorithms that reveals a novel generalization and enables fine-grained control of performance characteristics. Concretely, our contributions are as follows:
\begin{itemize}
    \item \textbf{MaxEnt\Plus\Plus} An improved version of MaxEnt IRL \cite{Ziebart2008} that is inspired by a connection to dominant eigenvectors to initialize the backward pass closer to the desired solution.
    \item \textbf{Spatial parallelization} A simple, practical technique to shard the global MDP by leveraging a geography-based sparse mixture-of-experts.
    \item \textbf{Graph compression strategies} Lossless and lossy methods to compress the graph matrices and reduce both the memory footprint and FLOP count across all IRL algorithms.
    \item \textbf{Receding Horizon Inverse Planning (\textsc{rhip})} A novel IRL algorithm that generalizes MaxEnt\Plus\Plus, BIRL \cite{Ramachandran2007} and MMP \cite{Ratliff2006}. \textsc{rhip} leverages our insight that there exists a trade-off between the use of cheap, deterministic planners and expensive yet robust stochastic policies. Practically, \textsc{rhip} enables interpolating between classic algorithms to realize policies that are both fast and accurate.
    \item \textbf{Alternative solvers} Secondary negative results when attempting to use Arnoldi iteration and a closed-form matrix geometric series in MaxEnt, which we defer to \cref{app:negative}.
    \item \textbf{MaxEnt theory} Secondary theoretical analyses of MaxEnt, which we defer to \cref{app:maxent-convergence}.
\end{itemize}
Our work culminates in a global policy that achieves a 15.9\% and 24.1\% lift in route accuracy for driving and two-wheelers, respectively (\cref{fig:world}), and was successfully applied to a large scale setting in \ifRedact a popular directions app\else Google Maps\fi. To the best of our knowledge, this represents the largest published study of IRL methods in a real-world setting to date.

\begin{figure}
  \centering
  \includegraphics[width=1.0\textwidth]{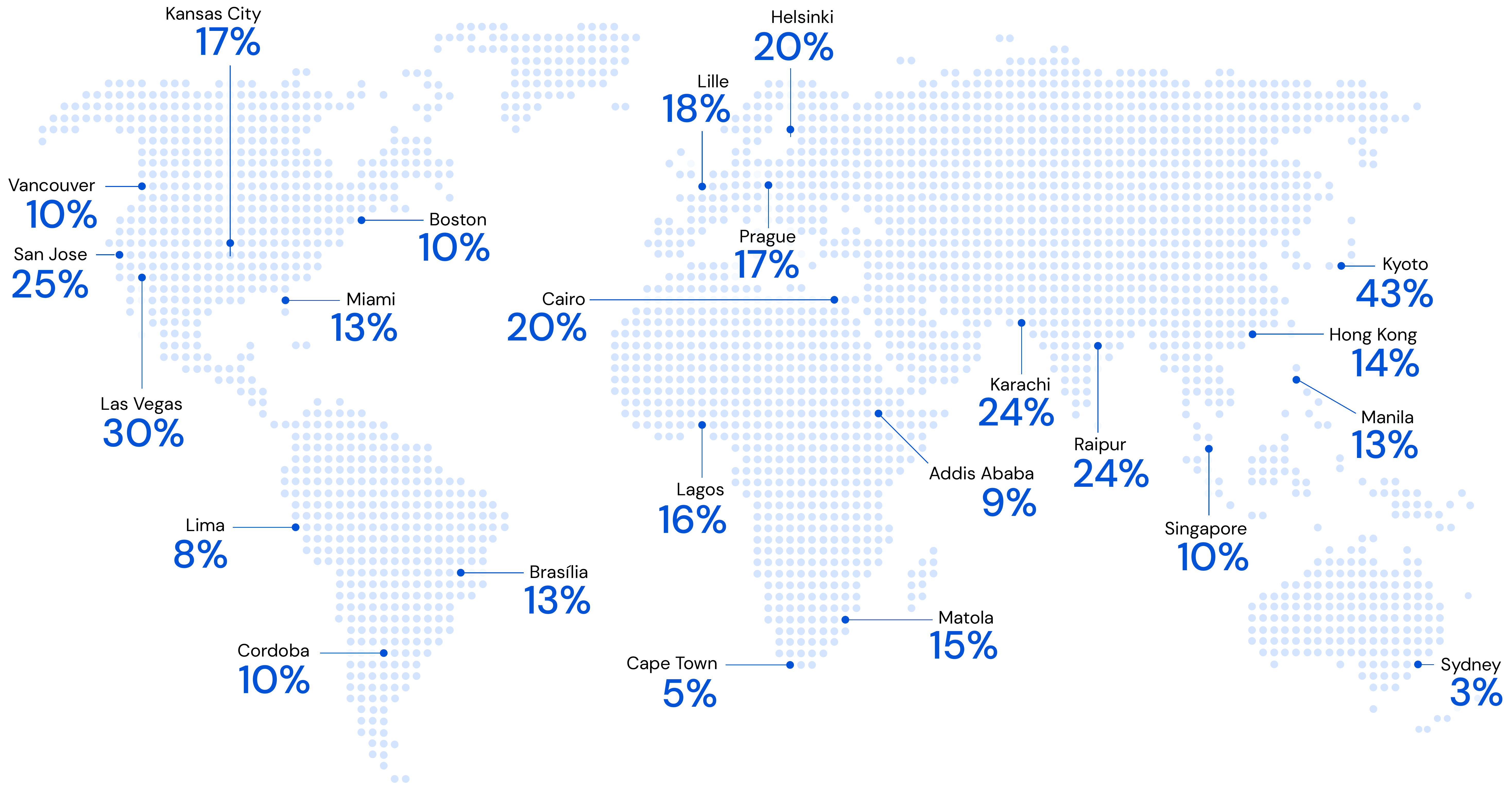}
  \caption{\ifRedact Route \else Google Maps route \fi accuracy improvements in several world regions, when using our inverse reinforcement learning policy. Full results are presented in \cref{tab:results} and \cref{fig:nodes_vs_accuracy}.}
  \label{fig:world}
\end{figure}

\section{Inverse Reinforcement Learning} \label{sec:irl}

A Markov decision process (MDP) $\mathcal{M}$ is defined by a discrete set of states $\mathcal{S}$, actions $\mathcal{A}$, transition kernel $\mathcal{T}$ and reward function $r$ (i.e.\ negative cost function). Given $\mathcal{M}\setminus r$ and a set of state-action trajectory demonstrations $\mathcal{D} = \{\tau_1, \dotsc, \tau_N \}$ sampled from a demonstration policy $\pi_E$, the goal of IRL is to recover the latent $r$.\footnote{This is an ill-conditioned problem, as multiple reward functions can induce the same trajectories. Methods impose regularization or constraints to form a unique solution, e.g.\ the principle of maximum entropy \cite{Ziebart2008}.} We denote the state and state-action distributions of trajectory $\tau$ by $\rho_\tau^s$ and $\rho_\tau^{sa}$, respectively.

For expository purposes, we initially restrict our attention to the classic path-planning problem, and discuss extensions in \cref{sec:discussion}. In line with prior work \cite{Ziebart2008}, we define these MDPs as discrete, deterministic, and undiscounted where $r_\theta(s_i, s_j) \leq 0$ denotes the $\theta$-parameterized reward of transitioning from state $s_i$ to state $s_j$ and non-allowable transitions have reward $-\infty$. There exists a single self-absorbing zero-reward destination state $s_d$, which implies that each unique destination induces a slightly different MDP. However, for the sake of notational simplicity and without loss of generality, we consider the special case of a single origin state $s_o$ and single destination state $s_d$. We do not make this simplification for any of our empirical results. In the path-planning context, states (i.e. nodes) represent road segments and allowable transitions (i.e. edges) between nodes represent turns.

Inverse reinforcement learning algorithms follow the two-player zero-sum game
\begin{equation} \label{eq:two-player}
\min_{\pi \in \Pi} \max_{\theta \in \Theta} J(\pi_E, r_\theta) - J(\pi, r_\theta) = \min_{\pi \in \Pi} \max_{\theta \in \Theta} f(\pi_E, \pi, r_\theta)
\end{equation}
where $J(\pi, r_\theta) = \mathbb{E}_{\tau \sim \pi} \sum_{(s_t, s_{t+1}) \in \tau} r_\theta(s_t, s_{t+1})$ denotes the value of the policy $\pi$ under reward function $r_\theta$. We consider primal strategies for the equilibrium computation, where the policy player follows a no-regret strategy against a best-response discriminative player \cite{Swamy2021}. Classic IRL algorithms follow from \cref{eq:two-player} by specifying certain policy classes and regularizers (see \cref{app:baselines} for details). Following \textcite{Ziebart2008}, we refer to the \emph{backward pass} as estimating the current policy $\pi$ and the \emph{forward pass} as rolling out the current policy. %The backward pass runtime in MMP and BIRL is governed by Dijkstra's algorithm, whereas the convergence of MaxEnt's backward pass is governed by the graph's second largest magnitude eigenvalue (see \cref{thm:rate}).

\paragraph{Goal conditioning} Learning a function $r_\theta$ using IRL provides a concise representation of preferences and simplifies transfer across goal states $s_d$, as the reward function is decomposed into a general learned term and a fixed modification at the destination (self-absorbing, zero-reward). In the tabular setting, the number of reward parameters is $\mathcal{O}(SA)$ even when conditioning on $s_d$. This is in contrast to approaches that explicitly learn a policy, Q-function or value function (e.g. BC, IQ-Learn \cite{Garg2021}, ValueDICE \cite{Kostrikov2020}, GAIL \cite{Ho2016}, DAGGER \cite{Ross2010}), which require additional complexity when conditioning on $s_d$, e.g.~in the tabular setting, the number of policy parameters increases from $\mathcal{O(SA)}$ to $\mathcal{O}(S^2A)$. By learning rewards instead of policies, we can evaluate $r_\theta$ \emph{once offline} for every edge in the graph, store the results in a database, precompute contraction hierarchies \cite{Geisberger2012}, and use a fast graph search algorithm to find the highest reward path\footnote{The highest reward path is equivalent to the \emph{most likely} path under MaxEnt \cite{Ziebart2008}.} for online $(s_o, s_d)$ requests. This is in contrast to a learned policy, which must be evaluated \emph{online for every $(s_o, s_d)$ request} and for every step in the sampled route -- a computationally untenable solution in many online environments.

\section{Related Work}

IRL approaches can be categorized according to the form of their loss function.\footnote{The IRL route optimization problem is reducible to supervised classification with infinitely many classes, where each class is a valid route from the origin to the destination, the features are specified by the edges, and the label is the demonstration route. Unlike typical supervised learning problems, solving this directly by enumerating all classes is intractable, so IRL approaches take advantage of the MDP structure to efficiently compute loss gradients.} MaxEnt \cite{Ziebart2008} optimizes cross-entropy loss, MMP \cite{Ratliff2006} optimizes margin loss, and BIRL \cite{Ramachandran2007} optimizes sequential likelihood. LEARCH \cite{Ratliff2009} replaces the quadratic programming optimization in MMP \cite{Ratliff2006} with stochastic gradient descent. \textcite{Choi2011} replace the MCMC sampling in BIRL with maximum a posteriori estimation. 
Extensions to continuous state-action spaces are possible through sampling-based techniques \cite{Finn2016GCL,Fu2018}. 
Our work builds on \textcite{Wulfmeier2015} and \textcite{Mainprice2016}, who applied MaxEnt and LEARCH to the deep function approximator setting.

Existing approaches to scale IRL consider several orthogonal and often complimentary techniques. \textcite{Michini2013} incorporate real-time dynamic programming \cite{Barto1995}, which is less applicable with modern accelerators' matrix operation parallelization. \textcite{Chan2021} apply a variational perspective to BIRL. 
MMP is inherently more scalable, as its inner loop only requires calling a planning subroutine (e.g.\ Dijkstra) \cite{Ratliff2009}. However, it lacks robustness to real-world noise, and has lost favor to more stable and accurate probabilistic policies \cite{Osa2018}. 
\textcite{Macglashan2015} similarly introduce a receding horizon to improve planning times.\footnote{\textcite{Macglashan2015} reduces to BIRL \cite{Ramachandran2007} with infinite horizon.} They assume zero reward beyond the horizon, whereas we assume a cheap deterministic planner beyond the horizon in order to propagate rewards from distant goal states.% \textcite{Snoswell2020,Snoswell2022} derive MaxEnt IRL based on a KL-divergence formulation, and provide a unified view with relative entropy IRL.
% add PPIL \cite{Viano2022} SQIL \cite{Reddy2019} etc

Recently, imitation learning approaches that directly attempt to recover the demonstrator policy have gained increased attention \cite{Ho2016,Ke2021}. Behavior cloning avoids performing potentially expensive environment roll-outs, but suffers regret quadratic in the horizon \cite{Ross2010}. DAGGER solves the compounding errors problem by utilizing expert corrections \cite{Ross2011}.
%IRL also addresses the compounding errors issue but without repeated expert involvement, although can be expensive as it requires solving the RL problem as a subroutine.
Recent approaches use the demonstrators' distribution to reduce exploration in the RL subroutine \cite{Swamy2023} and are complementary to our work. Mixed approaches such as GAIL simultaneously learn both a policy (generator) and reward function (discriminator) \cite{Finn2016Gan,Ho2016,Ke2021}, although can be susceptible to training instabilities \cite{xing2021algorithmic}.
IQ-Learn \cite{Garg2021} and ValueDICE \cite{Kostrikov2020} directly learn a Q-function and value function, respectively. We avoid explicitly learning a policy, Q-function or value function due to the goal conditioning requirement discussed in \cref{sec:irl}.

\section{Methods} \label{sec:methods}
\begin{figure}
  \centering
  \includegraphics[width=1.0\textwidth]{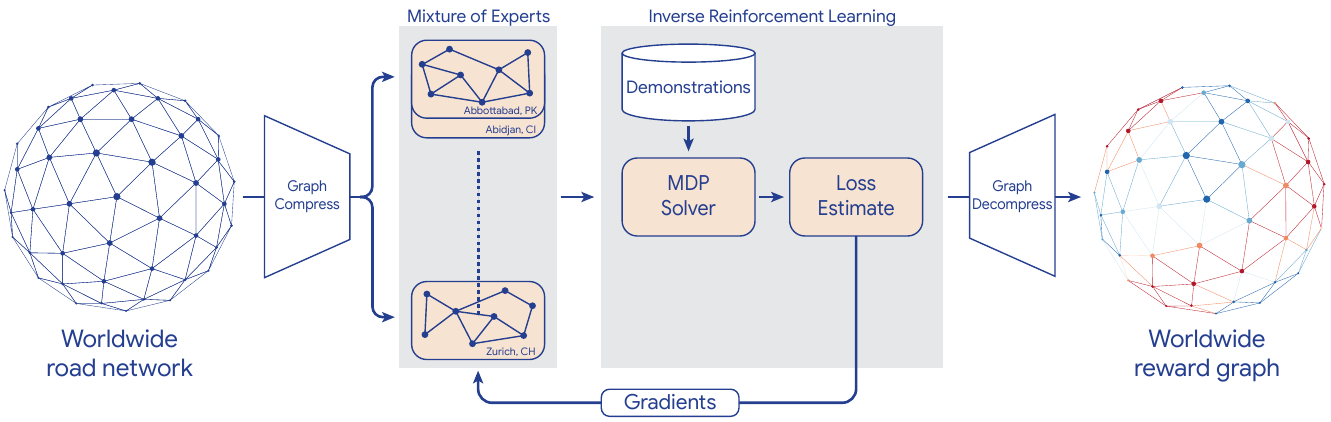}
  \caption{Architecture overview. The final rewards are used to serve online routing requests.}
  \label{fig:system-overview}
\end{figure}

The worldwide road network contains hundreds of millions of nodes and edges. At first glance, even attempting to fit the graph features into high-bandwidth memory to compute a single gradient step is infeasible.  In this section, we present a series of advancements which enable solving the world-scale IRL route finding problem, summarized in \cref{fig:system-overview}. At the end of \cref{sec:discussion}, we provide a useful summary of other directions which yield negative results.

\paragraph{Parallelism strategies} We use a sparse Mixture of Experts (MoE) strategy \cite{Shazeer2017}, where experts are uniquely associated to geographic regions and each demonstration sample is deterministically assigned to a single expert (i.e.\ one-hot sparsity). This minimizes cross-expert samples and allows each expert to learn routing preferences specific to its region. Specifically, we shard the global MDP $\mathcal{M}$ and demonstration dataset $\mathcal{D}$ into $m$ disjoint subproblems $(\mathcal{M}_1, \mathcal{D}_1), \dotsc, (\mathcal{M}_m, \mathcal{D}_m)$. We train region-specific experts $r_{\theta_1}, \dotsc, r_{\theta_m}$ in parallel, and compute the final global rewards using $r(s, a) = r_{\theta_i}(s, a)$ where $(s, a) \in \mathcal{M}_i$. We additionally use standard data parallelism strategies within each expert to further partition minibatch samples across accelerator devices\iftoggle{arxiv}{ \cite{TensorflowMirrored}}{}.

\paragraph{MaxEnt\Plus\Plus ~initialization}
MaxEnt \cite{Ziebart2008} is typically initialized to the value  (i.e.\ log partition) function $v^{(0)}(s_i) = \log\mathbbm{I}[s_i=s_d]$, where $\mathbbm{I}[s_i=s_d]$ is zero everywhere except at the destination node (see \cref{app:baseline-maxent}). This propagates information outwards from the destination, and requires that the number of dynamic programming steps is at least the graph diameter for arbitrary destinations. Instead, we initialize the values $v^{(0)}$ to be the \emph{highest reward to the destination from every node}. This initialization is strictly closer to the desired solution $v$ by 
{\setlength{\belowdisplayskip}{6pt}\begin{equation}
   \underbrace{\mystrut{3.5ex}\mathbbm{I}[s_i=s_d]}_{\text{MaxEnt initialization}} \leq
   \underbrace{\mystrut{3.5ex}\min_{\tau \in \mathcal{T}_{s_i, s_d}} e^{r(\tau)}}_{\text{MaxEnt\Plus\Plus~ initialization}} \leq
   \underbrace{\mystrut{0ex}\sum_{\tau \in \mathcal{T}_{s_i, s_d}} e^{r(\tau)} = v(s_i)}_{\text{Solution}}
\label{eq:maxentpp}
\end{equation}}where $\mathcal{T}_{s_i, s_d}$ is the (infinite) set of all paths which begin at $s_i$ and end at $s_d$ (see proof in \cref{app:maxentpp}). Note that equality only holds on contrived MDPs, and the middle term can be cheaply computed via Dijkstra or A\textsuperscript{*}. We call this method MaxEnt\Plus\Plus~ (summarized in \cref{alg:maxent}).\footnote{a nod to the improved initialization of k-means++ \cite{Arthur2006}.}

The correspondence between MaxEnt and power iteration provides a more intuitive perspective. Specifically, the MaxEnt backward pass initialization $e^{v^{(0)}}$ defines the initial conditions of power iteration, and the solution $e^v$ is the dominant eigenvector of the graph. By more closely aligning the initialization $v^{(0)}$ to the solution, the number of required power iteration steps is decreased.

\paragraph{Receding Horizon Inverse Planning (\textsc{rhip})}

\begin{figure}
\centering
\begin{minipage}{0.57\textwidth}
\begin{algorithm}[H]
\caption{\textsc{rhip} (Receding Horizon Inverse Planning)}\label{alg:rhip}
\begin{algorithmic}
\State \textbf{Input:} Current reward $r_\theta$, horizon $H$, \\
demonstration $\tau$ with origin $s_o$ and destination $s_d$
\State \textbf{Output:} Parameter update $\nabla_{\theta} f$
\State 
%{\color{darkred}
\State \texttt{\# Policy estimation}
\State $v^{(0)}(s) \gets \mathrm{\textsc{Dijkstra}}(r_\theta, s, s_d)$
\State $\pi_d(a | s) \gets \mathrm{Greedy}(r_\theta(s, a) + v^{(0)}(s'))$
%}
%\State
%{\color{darkblue}
%\State \texttt{\# Backup $H$-step stochastic policy}
\For{$h = 1, 2, \dotsc, H$}
\State $Q^{(h)}(s, a) \gets r_\theta(s, a) + v^{(h-1)}(s')$
\State $v^{(h)}(s) \gets \log \sum_a \exp Q^{(h)}(s, a)$ \iftoggle{arxiv}{\Comment{logsumexp op \cite{TensorflowLogsumexp}}}{}
\EndFor
\State $\pi_s(a | s) \gets \frac{\exp Q^{(H)}(s, a)}{\sum_{a'} \exp Q^{(H)}(s, a')}$
%}
\State $\pi \gets \left[\left({\color{black}\pi_s}\right)_{\times H}, \left({\color{black}\pi_d}\right)_{\times \infty}\right]$  \Comment{\cref{eq:rhip}}
\State
\State \texttt{\# Roll-out ${\color{black}\pi_s} \to {\color{black}\pi_d}$}
\State $\rho^{sa}_\theta \gets \mathrm{Rollout}(\pi, \rho_{\tau}^s)$
\State $\rho^{sa}_* \gets \mathrm{Rollout}(\pi_{2:\infty}, \rho_{\tau_{2:\infty}}^s) + \rho_{\tau}^{sa}$
\State
\State $\nabla_{\theta} f \gets \sum_{s, a} (\rho^{sa}_*(s, a) - \rho^{sa}_\theta(s, a))\nabla_{\theta}r_{\theta}(s, a)$
\State \Return $\nabla_{\theta} f$

\end{algorithmic}
%\caption{}
\end{algorithm}

\end{minipage}
\quad
\begin{minipage}{0.39\textwidth}
\centering

\iftoggle{arxiv}{
    \def\rhipxoffseta{2}
    \def\rhipxoffsetb{1.55}
    \def\rhipxoffsetc{1.95}
    \def\rhipxoffsetd{1.55}
}{
    \def\rhipxoffseta{2}
    \def\rhipxoffsetb{1.64}
    \def\rhipxoffsetc{1.95}
    \def\rhipxoffsetd{1.66}
}

\vspace{1.5em}
\begin{tikzpicture}[scale=0.86]
    \draw (0, 0.05) node[inner sep=0] {\includegraphics[width=5.46cm]{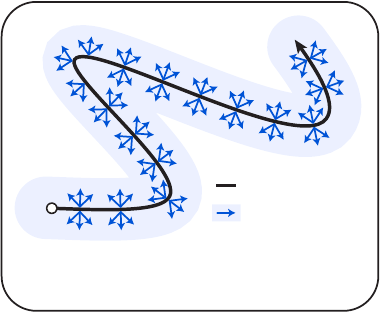}};
    \node[align=center] at (0, -2) {1. Roll-out stochastic policy $\bm{\pi_s}$\\for $H$ steps from demonstration};
    \node[align=left] at (\rhipxoffseta, -.45) {\scriptsize Demonstration};
    \node[align=left] at (\rhipxoffsetb, -.88) {\scriptsize Roll-out};
    \node[align=left] at (-2.6, -.8) {\scriptsize $s_o$};
    \node[align=left] at (1.75, 2.15) {\scriptsize $s_d$};
\end{tikzpicture}
%%%%%%%%%%%%%%%%%%%%%%%%%%%%%%%%%%%%%%%%%%%%%%%%%%%%%%%%
\\
\vspace{1em}
\begin{tikzpicture}[scale=0.86]
    \draw (0, 0.05) node[inner sep=0] {\includegraphics[width=5.46cm]{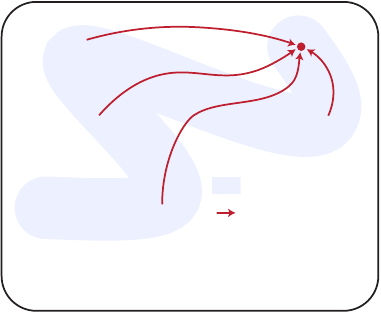}};
    \node[align=center] at (0, -2.2) {2. Continue roll-out with\\ deterministic policy $\bm{\pi_d}$\\};
    \iftoggle{arxiv}{
        \node[align=left] at (\rhipxoffsetc, -.45) {\scriptsize Initial distrib.};
        \node[align=left] at (\rhipxoffsetd, -.88) {\scriptsize Roll-out};
    }{
        \node[align=left] at (1.95, -.45) {\scriptsize Initial distrib.};
        \node[align=left] at (1.66, -.88) {\scriptsize Roll-out};
    }
    \node[align=left] at (1.75, 2.15) {\scriptsize $s_d$};
\end{tikzpicture}
\end{minipage}
\caption{\textsc{rhip} rolls out the computationally expensive stochastic policy $\pi_s$ for $H$ steps from the demonstration path before transitioning to the cheaper deterministic policy $\pi_d$, as described in \cref{eq:rhip}. \textsc{Dijkstra} computes the highest reward path from state $s$ to $s_d$ under $r_\theta$. $\mathrm{Greedy}(Q(s, a)) \in \{0, 1\}$ is the deterministic policy that selects the action with the highest $Q$-value. $\mathrm{Rollout}(\pi, \rho^s) = \rho^{sa}$ computes the state-action distribution $\rho^{sa}$ from rolling out the policy $\pi$ from the initial state distribution $\rho^s$ until convergence to the destination $s_d$. %In practice, we use a fused $\mathrm{logsumexp}$ operation to prevent numerical stability issues.
}
\end{figure}

In this section, our key insight is that classic IRL algorithms exhibit a trade-off between the use of cheap, deterministic planners (e.g. MMP \cite{Ratliff2006}) and the use of expensive yet robust stochastic policies (e.g. MaxEnt \cite{Ziebart2008}). This insight reveals a novel generalized algorithm that enables fine-grained control over performance characteristics and provides a new view on classic methods.

First, let $\pi_s$ denote the stochastic policy after $H$ steps of MaxEnt\Plus\Plus ~and $\pi_d$ denote the deterministic policy that follows highest reward path, i.e.\ $\pi_d(a | s) \in \{0, 1\}$. Let $\mathcal{T}_{s, a}$ denote the set of all paths which begin with state-action pair $(s, a)$. We introduce a new policy defined by
\begin{equation}
    \pi(a|s) \propto \sum_{\tau \in \mathcal{T}_{s, a}}\underbrace{\mystrut{2ex} \pi_d(\tau_{H+1})}_{\text{Deterministic policy}} \prod_{h=1}^H \underbrace{\mystrut{2ex}\pi_s(a_h | s_h)}_{\text{Stochastic policy}}. \label{eq:rhip}
\end{equation}
The careful reader will notice that \cref{eq:rhip} reduces to classic IRL algorithms for various choices of $H$. For $H{=}\infty$ it reduces to MaxEnt\Plus\Plus, for $H{=}1$ it reduces to BIRL \cite{Ramachandran2007}, and for $H{=}0$ it reduces to MMP \cite{Ratliff2006} with margin terms absorbed into $r_\theta$ (see \cref{app:baselines} for details).

We call this generalization Receding Horizon Inverse Planning (\textsc{rhip}, pronounced \emph{rip}). As described in \cref{alg:rhip}, \textsc{rhip} performs $H$ backup steps of MaxEnt\Plus\Plus, rolls out the resulting stochastic policy $\pi_s$ for $H$ steps, and switches to rolling out the deterministic policy $\pi_d$ until reaching the destination. The receding horizon $H$ controls \textsc{rhip}'s compute budget by trading off the number of stochastic and deterministic steps. The stochastic policy $\pi_s$ is both expensive to estimate (backward pass) and roll-out (forward pass) compared to the deterministic policy $\pi_d$, which can be efficiently computed via Dijkstra's algorithm. 

\paragraph{Graph compression} We introduce two graph compression techniques to reduce both the memory footprint and FLOP count across all IRL algorithms. The graph adjacency matrix is represented by a $B \times S \times V$ tensor, where entry $(b, s, v)$ contains the reward of the $v$'th edge emanating from node $s$ in batch sample $b$. Thus, $V$ is the maximum node degree valency, and nodes with fewer than $V$ outgoing edges are padded. For many problems, $V$ is tightly bounded, e.g. typically $V<10$ in road networks. First, we perform lossless compression by `spliting' nodes with degree close to $V$ into multiple nodes with lower degree. Since the majority of nodes have a much smaller degree than $V$, this slightly increases $S$ but can significantly decrease the effective $V$, thus reducing the overall tensor size $BSV$ in a lossless fashion. Second, we perform lossy compression by `merging' nodes with a single outgoing edge into its downstream node as there is only one feasible action. Feature vectors of the merged nodes are summed, which is lossless for linear $r_\theta$ but introduces approximation error in the nonlinear setting. Intuitively, these compression techniques balance the graph's node degree distribution to reduce both the tensor padding (memory) and FLOP counts.

% Based on
% https://screenshot.googleplex.com/3KWbsbro7BGKCBc
% https://screenshot.googleplex.com/B2ras5FLvioZiL3
% https://screenshot.googleplex.com/8tmEqYAAfF9Pqqh
% https://colab.corp.google.com/drive/1iyS8ytAxGZBavEEcjjCB0v8HeyBSngqr
\section{Empirical Study} \label{sec:experiments}
\begin{table}
  \caption{Route quality of manually designed and IRL baselines. Due to the high computational cost of training the global model (bottom 3 rows), we also evaluate in a smaller, more computationally tractable set of metros (top section). Metrics are NLL (negative log-likelihood), Acc (accuracy, i.e.\ perfect route match) and IoU (Intersection over Union of trajectory edges). Numbers in bold are statistically significant with p-value less than .122 (see \cref{app:pvalue}). Two-wheeler data is unavailable globally.}
  \centering
  \begin{tabular}{llrrrrrrrr}
    \toprule
    \multicolumn{2}{c}{} & \multicolumn{3}{c}{Drive} & \multicolumn{3}{c}{Two wheelers} \\
    \cmidrule(r){3-5} \cmidrule(r){6-8}
    Policy class & Reward $r_\theta$                      & NLL & Acc & IoU & NLL & Acc & IoU  \\
    \midrule
    ETA                              & Linear          &  & .4034 & .6566 &  & .4506 & .7050 \\
    ETA+penalties                    & Linear          &  & .4274 & .6823 &  & .4475 & .7146 \\
    MMP/LEARCH [34, 35] & Linear &  & .4244 & .6531 & & .4687 & .7054 \\
                                     & SparseLin    &  & .4853 & .7069 & & .5233 & .7457 \\
    Deep LEARCH [29] & DNN             &  & .4241 & .6532 & & .4777 & .7141 \\
                                    & DNN+SparseLin & & .4682 & .6781 & & .5220 & .7300 \\
    BIRL [9, 32]  & Linear    & 3.933 & .4524 & .6945 & 3.629 & .4933 & .7314 \\
                                     & SparseLin    & 26.840 & .4900 & .7084 & 8.975 & .5375 & .7508 \\
    Deep BIRL                          & DNN           & 3.621 & .4617 & .6958 & 3.308 & .4973 & .7340 \\
                                 & DNN+SparseLin    & 2.970 & .4988 & .7063 & 2.689 & .5546 & .7587 \\
    MaxEnt [53], MaxEnt\Plus\Plus  & Linear                & 4.4409 & .4521 & .6941 & 3.957 & .4914 & .7293 \\
                   & SparseLin                      & 26.749 & .4922 & .7092 & 8.876 & .5401 & .7522 \\
    Deep MaxEnt [46]      & DNN        & 3.729 & .4544 & .6864 & 3.493 & .4961 & .7308 \\
                                & DNN+SparseLin     & 2.889 & .5007 & .7062 & 2.920 & .5490 & .7516 \\
    RHIP                        & Linear               & 3.930 & .4552 & .6965 & 3.630 & .4943 & .7319 \\
                                & SparseLin         & 26.748 & .4926 & .7095 & 8.865 & .5408 & .7522 \\
                                & DNN                  & 3.590 & .4626 & .6955 & 3.295 & .5000 & .7343 \\
                                & DNN+SparseLin     & 2.881 & \textbf{.5030} & .7086 & 2.661 & \textbf{.5564} & .7591 \\
    \midrule
    Global ETA     & Linear          &  & .3891 & .6538 &  &  &  \\
    Global ETA+penalties   & Linear          &  & .4283 & .6907 &  &  &  \\
    Global RHIP    & DNN+SparseLin          & 8.194 & \textbf{.4958} & \textbf{.7208} &  &  &  \\
    \bottomrule
  \end{tabular}
  \label{tab:results}
\end{table}

\paragraph{Road graph}
Our 200M state MDP is created from \ifRedact a popular directions app's \else the Google Maps \fi road network graph. Edge features contain predicted travel duration (estimated from historical traffic) and other relevant static road properties, including distance, surface condition, speed limit, name changes and road type.

\paragraph{Demonstration dataset}
Dataset $\mathcal{D}$ contains de-identified users' trips collected during active navigation mode\ifRedact \else~\cite{GoogleMapsData}\fi. We filter for data quality by removing trips which contain loops, have poor GPS quality, or are unusually long. The dataset is a fixed-size subsample of these routes, spanning a period of two weeks and evenly split into training and evaluation sets based on date. Separate datasets are created for driving and two-wheelers, with the two-wheeler (e.g. mopeds, scooters) dataset being significantly smaller than the drive dataset due to a smaller region where this feature is available. The total number of iterated training and validation demonstration routes are 110M and 10M, respectively. See \cref{app:experiments} for details.

\paragraph{Experimental region}
Due to the high computational cost of training the global model, we perform initial hyperparameter selection on a smaller set of 9 experimental metros (Bekasi, Cairo, Cologne, Kolkata, Manchester, Manila, Nottingham, Orlando, and Syracuse). The top-performing configuration is used to train the global driving model. Two-wheeler data is unavailable globally and thus not reported.

\paragraph{Baselines}
We evaluate both manually designed and IRL baselines. For fixed baselines, we consider (1) \texttt{ETA}: The fastest route, i.e. edge costs are the predicted travel duration and (2) \texttt{ETA+penalties}: ETA plus manually-tuned penalties for intuitively undesirable qualities (e.g.\ u-turns, unpaved roads), delivered to us in a closed form without visibility into the full set of underlying features. For IRL policy baselines, we compare MaxEnt (\cref{alg:maxent}) \cite{Ziebart2008}, Deep MaxEnt \cite{Wulfmeier2015}, the LEARCH \cite{Ratliff2009} variation of MMP \cite{Ratliff2006} (\cref{alg:mmp}), Deep LEARCH \cite{Mainprice2016}, and the maximum a posteriori variation of BIRL \cite{Choi2011} (\cref{alg:birl}). We also consider a deep version of BIRL similar to \textcite{Brown2019DeepBR}.

\paragraph{Reward model descriptions}
We evaluate three MoE function approximator classes: (1) a simple linear model, (2) a dense neural network (DNN) and (3) an $\ell_1$-regularized reward parameter for every edge in the graph (SparseLin). The latter is of particular interest because it tends to highlight data-quality issues, for example in \cref{fig:data-errors}. These models have 3.9k, 144k, and 360M global parameters, respectively. We constrain model weights to produce non-positive rewards and fine-tune all models from the \texttt{ETA+penalties} baseline. DNN+SparseLin indicates additive DNN and SparseLin components. See \cref{app:hparams} for details.

\begin{figure}
    \centering
    \begin{minipage}[t]{0.3\textwidth}
        \centering
        {\color{white}\textbf{Space}}
        \vspace{0.5em}
         \includegraphics[width=1.0\textwidth]{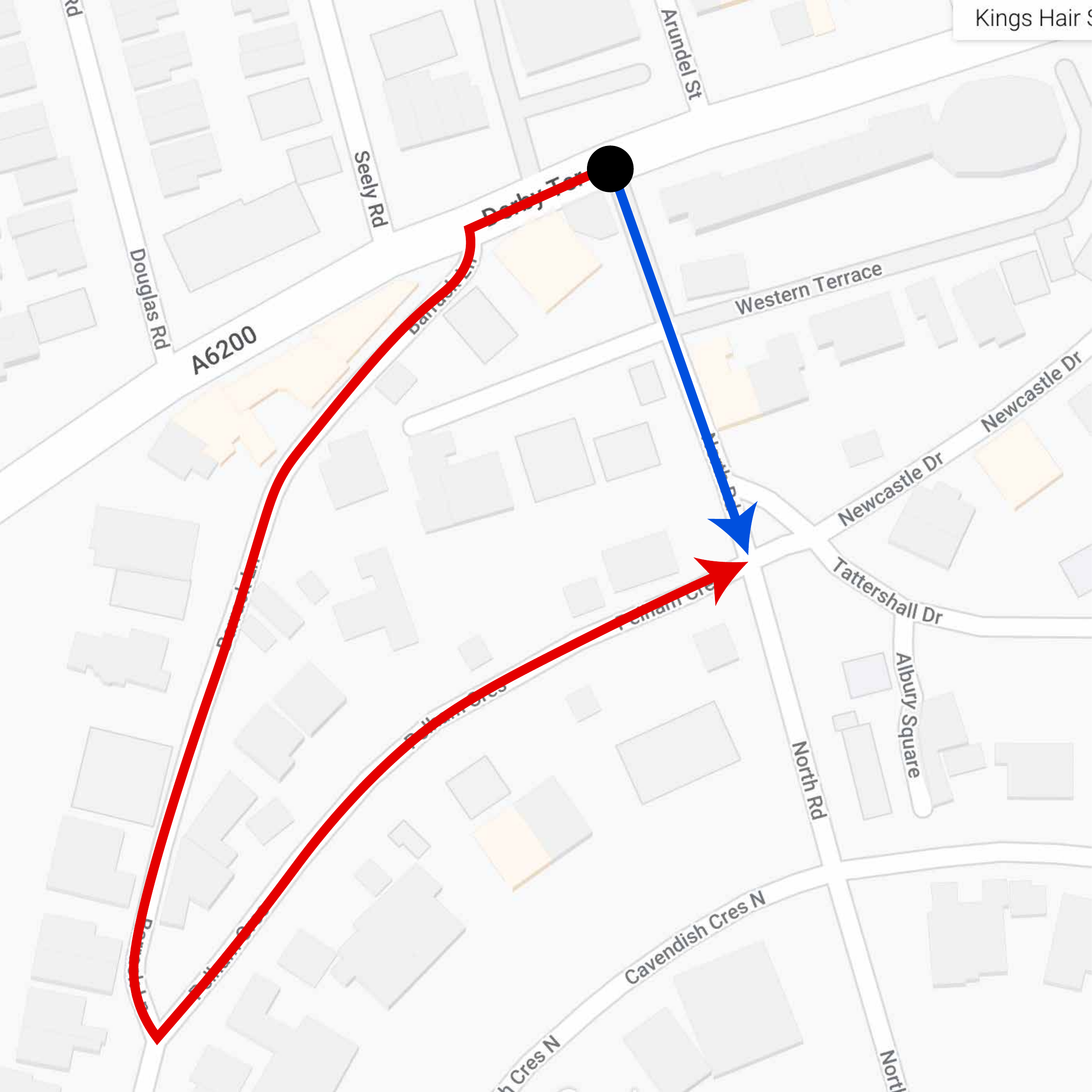}
    \end{minipage}
    \quad
    \begin{minipage}[t]{0.3\textwidth}
        \centering
        {\color{darkblue}\textbf{\underline{Preferred route}}}
        \vspace{0.5em}
        \begin{tikzpicture}
            \draw (0, 0) node[inner sep=0] {\includegraphics[width=1.0\textwidth]{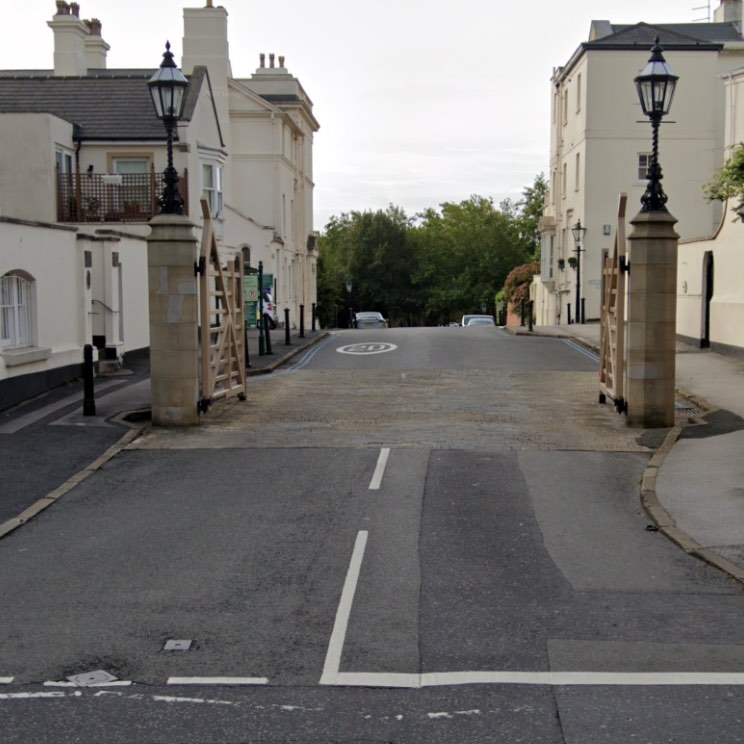}};
            \draw[white] (0.2, -1) node {\textbf{Open gate}};
            \draw [->,line width=0.5mm,white] plot [] coordinates {(-.8,-.8) (-1.3,-.3)};
            \draw [->,line width=0.5mm,white] plot [] coordinates {(1.2,-.8) (1.7,-.3)};
        \end{tikzpicture}
    \end{minipage}
    \quad
    \begin{minipage}[t]{0.3\textwidth}
        \centering
        {\color{darkred}\underline{\textbf{Detour route}}}
                
        \vspace{0.5em}
         \includegraphics[width=1.0\textwidth]{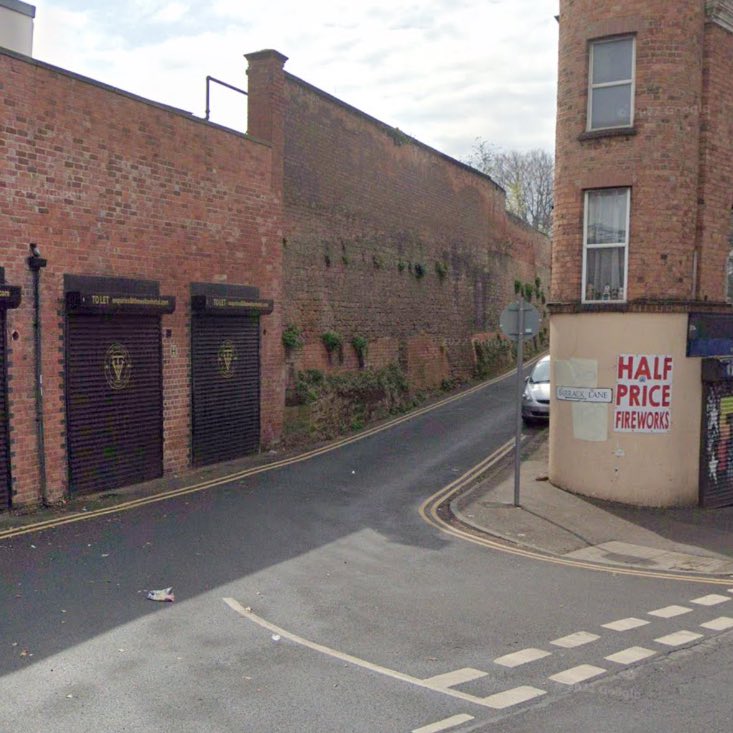}
    \end{minipage}
    \caption{Example of the 360M parameter sparse model finding and correcting a data quality error in Nottingham. The {\color{darkblue}preferred route} is incorrectly marked as private property due to the presence of a gate (which is never closed), and incorrectly incurs a high cost. The {\color{darkred}detour route} is long and narrow. The sparse model learns to correct the data error with a large positive reward on the {\color{darkblue}gated segment}. Additional examples are provided in \cref{app:experiments}.}
    \label{fig:data-errors}
\end{figure}

\paragraph{Metrics}
For serving online routing requests, we are interested in the highest reward path from $s_o$ to $s_d$ path under $r_\theta$ (and not a probabilistic sample from $\pi$ or a margin-augmented highest reward path). For accuracy, a route is considered correct if it perfectly matches the demonstration route. Intersection over Union (IoU) captures the amount of overlap with the demonstration route, and is computed based on unique edge ids. Negative log-likelihood (NLL) loss is reported where applicable.

\subsection{Results}

We train the final global policy for 1.4 GPU-years on a large cluster of V100 machines, which results in a significant 15.9\% and 24.1\% increase in route accuracy relative to the \texttt{ETA+penalties}  baseline models for driving and two-wheelers, respectively. As shown in \cref{tab:results}, this \textsc{rhip} policy with the largest 360M parameter reward model achieves state-of-the-art results with statistically significant accuracy gains of 0.4\% and 0.2\% compared to the next-best driving and two-wheeler IRL policies, respectively.

We observe dynamic programming convergence issues and large loss spikes in MaxEnt which tend to occur when the rewards become close to zero. In \cref{app:maxent-convergence} we prove this phenomena occurs precisely when the dominant eigenvalue of the graph drops below a critical threshold of 1 (briefly noted in \textcite[pg. 117]{Ziebart2010}) and show the set of allowable $\theta$ is provably convex in the linear case. Fortunately, we are able to manage the issue with careful initialization, learning rates, and stopping conditions. %Backtracking line search is another possible mitigation strategy, and it's unknown whether an efficient projection exists. 
Note that \textsc{rhip} (for $H < \infty$), BIRL and MMP provably do not suffer from this issue. All value functions (\cref{alg:rhip,alg:maxent,alg:birl}) are computed in log-space to avoid significant numerical stability issues.

We empirically study the trade-off between the use of cheap, deterministic planners and more robust yet expensive stochastic policies in \cref{fig:tradeoff}. As expected, %MMP is fast but exhibits poor accuracy and unstable training, likely due to its lack of robustness to noise \cite{Osa2018}. On the other hand,
MaxEnt has high accuracy but is slow to train due to expensive dynamic programming, and MaxEnt\Plus\Plus~is 16\% faster with no drop in accuracy. \textsc{rhip} enables realizing a broad set of policies via the choice of $H$. Interestingly, we find that $H{=}10$ provides both the best quality routes and 70\% faster training times than MaxEnt, i.e.\ MaxEnt is not on the Pareto front. We hypothesize this occurs due to improved policy specification. BIRL and MaxEnt assume humans probabilistically select actions according to the highest reward path or reward of all paths beginning with the respective state-action pair, respectively. However, in practice, humans may take a mixed approach -- considering all paths within some horizon, and making approximations beyond that horizon.

\cref{tab:compression} shows the impact of graph compression in our experimental region. The \texttt{split} strategy is lossless (as expected), and the \texttt{split+merge} strategy provides a significant 2.7x speed-up with almost no impact on route quality metrics. All empirical results take advantage of the \texttt{split+merge} graph compression. We find that data structure choice has a significant impact on training time. We try using unpadded, coordinate format (COO) sparse tensors to represent the graph adjacency matrix, but profiling results in our initial test metro of Bekasi show it to be 50x slower.

\iftoggle{arxiv}{
    \def\tradeoffcolumnawidth{7.5cm}
    \def\tradeoffcolumnsep{1cm}
    \def\tradeoffcolumnbwidth{8.5cm}
    \def\tradeoffscale{0.8}
}{
    \def\tradeoffcolumnawidth{5cm}
    \def\tradeoffcolumnsep{.6cm}
    \def\tradeoffcolumnbwidth{8cm}
    \def\tradeoffscale{0.6}
}
\begin{table}[t]
    \begin{tabular}{p{\tradeoffcolumnawidth}@{\hskip \tradeoffcolumnsep}p{\tradeoffcolumnbwidth}} % CHANGE first width from .37\textwidth. change middle from .6
    \begin{tikzpicture}[scale=\tradeoffscale]
\begin{axis}[
        xmin=0.0,
        ymin=0.4516,
        xlabel={Training steps per second},
        ylabel={Accuracy},
        ytick={.452,.453,.454},
        y tick label style={
        /pgf/number format/.cd,
        fixed,
        %fixed zerofill,
        precision=3,
        /tikz/.cd
    },
    ]
    \addplot[
        scatter/classes={a={darkblue}, b={darkred}},
        scatter, mark=*, only marks, 
        scatter src=explicit symbolic,
        %xlabel style={yshift=-30pt},
        %ylabel style={yshift=20pt,xshift=-10pt},
        %nodes near coords*={\Label},
        %nodes near coords style = {inner xsep=-1pt},
        %visualization depends on={value \thisrow{label} \as \Label}
    ] table [meta=class] {
        x y class label
        0.31019781774 0.4520822433492976 a MaxEnt
        0.3799923267324859 0.4520822433492976 b MaxEnt++
        0.5114231907658277 0.45218411677000836 b B
        0.698998940016945 0.45409328674550076 b C
        1.1005080946204624 0.4524569972269642 b D
        1.3918560602940553 0.4523540838670357 b BIRL
        %2.8776582669305992 0.4245104729868341 a MMP
    };
    \draw [-stealth](axis cs: 0.31019781774-.1,0.4520822433492976-.00015) -- (axis cs: 0.31019781774-.02,0.4520822433492976-.00003);
    \draw [-stealth](axis cs: 0.3799923267324859-.1,0.4520822433492976+.0002) -- (axis cs: 0.3799923267324859-.01,0.4520822433492976+.00005);
    \node[] at (axis cs: 0.31019781774-.15,0.4520822433492976-.00025) {MaxEnt};
    \node[] at (axis cs: 0.3799923267324859-.1,0.4520822433492976+.0003) {MaxEnt\Plus\Plus};
    \node[] at (axis cs:  0.3799923267324859,0.4520822433492976-.00015) {\color{darkred}$\infty$};
    \node[] at (axis cs: 0.5114231907658277,0.45218411677000836-.00015) {\color{darkred}$100$};
    \node[] at (axis cs: 0.698998940016945,0.45409328674550076-.00015) {\color{darkred}$10$};
    \node[] at (axis cs: 1.1005080946204624,0.4524569972269642-.00015) {\color{darkred}$2$};
    \node[] at (axis cs: 1.2,0.454) {\color{darkred}\textsc{rhip}};
    \node[] at (axis cs: 1.2,0.45383) {\color{darkred}(sweep over $H$)};
    \node[] at (axis cs: 1.3918560602940553,0.4523540838670357-.00015) {\color{darkred}$1$};
    %\node[] at (axis cs: 2.58,0.42451+.001) {\color{darkred}$0$};
\end{axis}
\end{tikzpicture} & \vspace{-11.5em}
    \hfil \begin{tabular}{lrrrrr} %p{1.5cm
    \toprule
    \multicolumn{1}{l}{Graph} & \multicolumn{1}{c}{\multirow{2}{*}{$N$}} & \multicolumn{1}{c}{\multirow{2}{*}{$V$}} & \multicolumn{1}{c}{Steps} & \multicolumn{1}{c}{\multirow{2}{*}{NLL}} & \multicolumn{1}{c}{\multirow{2}{*}{Acc}} \\ %\multicolumn{1}{>{\raggedleft\arraybackslash}p{1.1cm}}{Steps per sec}
    \multicolumn{1}{l}{compression} & & & \multicolumn{1}{c}{per sec} & & \\
    \midrule
    None              & 124,402 & 4.9 & .373 & 9.371 & .454 \\
    Split             & 125,278 & 3.0 & .412 & 9.376 & .455 \\
    Merge             & 84,069  & 4.9 & .843 & 9.381 & .455 \\
    Split+Merge       & 84,944  & 3.0 & \textbf{.993} & 9.389 & .455 \\
    \bottomrule
  \end{tabular} \\
    \captionof{figure}{Impact of the horizon on accuracy and training time. $H=10$ has the highest accuracy and trains 70\% faster than MaxEnt. \iftoggle{arxiv}{MaxEnt\Plus\Plus~($H=\infty$) provides a 16\% speedup over MaxEnt with no drop in accuracy.}{}}
    \label{fig:tradeoff} &
    \captionof{table}{Graph compression improves training time by 2.7x with insignificant impact on route quality metrics.}
    \label{tab:compression}
    \end{tabular} \vspace{-2.5em}
\end{table}

In \cref{fig:cross-metro}, we study the local geographic preferences learned by each expert in the mixture by performing an out-of-region generalization test. The drop in off-diagonal performance indicates the relative significance of local preferences. In \cref{fig:nodes_vs_accuracy}, we examine the relationship between region size and the performance of the model. Accuracy is nearly constant with respect to the number of states. However, training is significantly faster with fewer states, implying more equally sized regions would improve computational load balancing.

\paragraph{Negative results}
We study several other ideas which, unlike the above contributions, do not meaningfully improve scalability. First, the MaxEnt backward pass is equivalent to applying power iteration to solve for the dominant eigenvector of the graph (\cref{eq:power-iteration}). Instead of using power iteration (\cref{alg:maxent}), we consider using Arnoldi iteration from \verb+ARPACK+ \cite{Lehoucq1998}, but find it to be numerically unstable due to lacking a log-space implementation (see results in \cref{app:arnoldi}). Second, the forward pass used in MaxEnt has a closed form solution via the matrix geometric series (\cref{eq:closed-form}). Using \verb+UMFPACK+ \cite{Davis2004} to solve for this solution directly is faster on smaller graphs containing up to around 10k nodes, but provides no benefit on larger graphs (see results in \cref{app:geometric-series}).

\begin{figure}[t]
    \centering
    \begin{minipage}[t]{0.5\textwidth}
        \centering
\begin{tikzpicture}[scale=0.67]
    % fake
    \draw[step=0.25cm,color=white!20] (0,-2.5) grid (0,2.5);
    \begin{axis}[
            colormap={custom}{ 
            color=(darkblue)
            color=(lightblue)
            %rgb255=(240,238,226,0) 
            color=(white)
            color=(lightred)
            color=(darkred)},
            %rgb255=(167,194,164,255)
            %rgb255=(68,140,130,255)},
            point meta min=-6,
            point meta max=6,
            xlabel=\textbf{Evaluation region},
            xlabel style={yshift=-30pt},
            ylabel=\textbf{Training region},
            ylabel style={yshift=20pt,xshift=-10pt},
            xticklabels={Barcelona, Enschede, Nottingham, Strasbourg, Syracuse},
            xtick={0,...,4},
            xtick style={draw=none},
            yticklabels={Barcelona, Enschede, Nottingham, Strasbourg, Syracuse},
            ytick={0,...,4},
            ytick style={draw=none},
            yticklabel style={
              yshift=-10pt,
              rotate=45,
            },
            enlargelimits=false,
            colorbar,
            colorbar style={
                title=\% accuracy,
                %ylabel=Accuracy (\% change),
                ylabel style={yshift=-70pt},
            },
            xlabel style={yshift=-5pt},
            xticklabel style={
              xshift=-10pt,
              rotate=45,
            },
            nodes near coords align={},
            nodes near coords={\pgfmathprintnumber[fixed,zerofill, precision=1]\pgfplotspointmeta},
            %nodes near coords style={
            %    yshift=-7pt
            %},
        ]
        \addplot[
            matrix plot,
            mesh/cols=5,
            point meta=explicit,draw=gray
        ] table [meta=C] {
            x y C
            0 0 0
            1 0 -6.0
            2 0 -2.8
            3 0 -3.7
            4 0 -9.1
            
            0 1 -1.5
            1 1 0
            2 1 -1.7
            3 1 -3.8
            4 1 -3.1
            
            0 2 0.4
            1 2 -2.8
            2 2 0.0
            3 2 -1.0
            4 2 -4.4
            
            0 3 0.4
            1 3 -3.3
            2 3 -0.1
            3 3 0.0
            4 3 -2.5
        
            0 4 -0.2
            1 4 -0.7
            2 4 -1.2
            3 4 -2.7
            4 4 0
        };
    \end{axis}
\end{tikzpicture}
\caption{Sparse mixture-of-experts learn preferences specific to their geographic region, as demonstrated by the drop in off-diagonal performance.}
    \label{fig:cross-metro}
    \end{minipage}\hfill
    \begin{minipage}[t]{0.45\textwidth}
        \centering
        \begin{tikzpicture}[scale=0.67]
        \draw[step=0.25cm,color=white!20] (0,-2.5) grid (0,2.5);
    \begin{axis}[
            xmode=log,
            xlabel=Number of states,
            xtick={200000,400000,800000,1600000},
            xticklabels={200k,400k,800k,1.6M},
            ytick={-10,0,10,20,30,40},
            yticklabels={-10\%,0\%,10\%,20\%,30\%,40\%},
            ylabel=Accuracy relative to baseline,
            scatter/classes={a={mark=o,draw=black}}]
        \addplot[scatter,only marks,mark options={darkblue,opacity=0.5},
                scatter src=explicit symbolic]%
            table [x=nodes, y=lift, col sep=comma] {lift_vs_n_nodes.csv};
        \addplot+[name path=A,color=darkblue,fill=darkblue,opacity=0.3,no marks] coordinates {(200000,17.419068047840888)(300258.94457465014,16.582505302767494)(438989.6751594008,16.134831290666092)(517896.43774316425,15.954824557496666)(1035386.3891631619,15.639729799546053)(1980000,15.786112173040745)(1980000,11.99426643412543)(965929.1228508552,13.454898075702154)(509799.1177728384,14.480653363918533)(407330.132778588,14.696158713420422)(302448.16182531306,14.74562151802831)(200000,14.646734955826235)} -- cycle;
        \addplot[color=darkblue] coordinates {
    (200000,16.015)(1978753,13.81)
    };
    \end{axis}
\end{tikzpicture}
        \caption{Region accuracy for each expert in the worldwide mixture. Performance is consistent across the state space size.}
        \label{fig:nodes_vs_accuracy}
    \end{minipage}
\end{figure}

\section{Discussion} \label{sec:discussion}
\paragraph{Future research}
We study multiple techniques for improving the scalability of IRL and believe there exist several future directions which could lead to additional gains. First, demonstration paths with the same (or nearly the same) destination can be merged together into a single sample. Instead of performing one IRL backward and forward pass per mini-batch sample, we can now perform one iteration \emph{per destination}. This enables compressing large datasets and increasing the effective batch size, although it reduces shuffling quality, i.e. samples are correlated during training. Second, the graph could be pruned by removing nodes not along a `reasonable' path (i.e. within some margin of the current best path) from $s_o$ to $s_d$ \cite[pg. 119]{Ziebart2010}. However, this may be difficult to do in practice since the pruning is unique for every origin-destination pair. 

From an accuracy perspective, we find that the sparse mixture-of-experts learn routing preferences specific to geographic regions. This vein could be further pursued with personalization, potentially via a hierarchical model \cite{Choi2014, Choi2012}. Due to engineering constraints, we use static ETA predictions, but would like to incorporate the dynamic GraphNet ETA predictions from \textcite{Derrow2021}. We find that sparse reward models tend to highlight groups of edges which were impacted by the same underlying data quality issue. A group lasso penalty may be able to leverage this insight. Including human domain knowledge may robustify and help shape the reward function \cite{Wulfmeier2016b,doi:10.1177/0278364917722396}. In this paper, we evaluate performance on driving and two-wheelers, and would like to incorporate other modes of transportation -- especially walking and cycling -- but are limited by engineering constraints.

\paragraph{Extensions to other MDPs} For expository purposes, we restrict attention to discrete, deterministic, and undiscounted MDPs with a single self-absorbing destination state. Our contributions naturally extend to other settings. MaxEnt\Plus\Plus~ and \textsc{rhip} can be applied to any MDP where MaxEnt is appropriate and  $v^{(0)}$ can be (efficiently) computed, e.g.\ via Dijkstra's. Our parallelization extends to all MDPs with a reasonable partition strategy, and the graph compression extends to stochastic MDPs (and with further approximation, discounted MDPs).

\paragraph{Limitations} IRL is limited by the quality of the demonstration routes. Even with significant effort to remove noisy and sub-optimal routes from $\mathcal{D}$, our policy will inadvertently learn some rewards which do not reflect users' true latent preferences. Our MoE strategy is based on \emph{geographic regions}, which limits the sharing of information across large areas. This could be addressed with the addition of global model parameters. However, the abundance of demonstrations and lack of correlation between region size and accuracy (\cref{fig:nodes_vs_accuracy}) suggests benefits may be minimal.

\section{Conclusion}
Increasing performance via increased scale -- both in terms of dataset size and model complexity -- has proven to be a persistent trend in machine learning. Similar gains for inverse reinforcement learning problems have historically remained elusive, largely due to additional challenges posed by scaling the MDP solver. We contribute both (a) techniques for scaling IRL to problems with hundreds of millions of states, demonstration trajectories, and reward parameters and (b) a novel generalization that enables interpolating between classic IRL methods and provides fine-grained control over accuracy and planning time trade-offs. Our final policy is applied in a large scale setting with hundreds of millions of states and demonstration trajectories, which to the best of our knowledge represents the largest published study of IRL algorithms in a real-world setting to date.

\vspace{\fill}

\pagebreak

\ifRedact \else
\subsubsection*{Acknowledgments}
We gratefully acknowledge the contributions of Renaud Hartert, Rui Song, Thomas Sharp, Rémi Robert, Zoltan Szego, Beth Luan, Brit Larabee  and Agnieszka Madurska towards an early exploration of this project for cyclists. Arno Eigenwillig provided useful suggestions for the graph's padded data structure, Jacob Moorman provided insightful discussions on the theoretical aspects of eigenvalue solvers, Jonathan Spencer provided helpful references for MaxEnt's theoretical analysis, and Ryan Epp explored the feasibility of an alternative approach to re-rank a small set of candidate routes. We are thankful for Remi Munos', Michael Bloesch's, Arun Ahuja's and the anonymous reviewers' feedback on final iterations of this work.
\fi

\subsection*{Ethics statement}
We strongly support open-sourcing experiments when legally and ethically permissible. The routing dataset used in this paper -- and likely any routing dataset of similar scope -- cannot be publicly released due to strict user privacy requirements.
\printbibliography
\vspace{\fill}

%%%%%%%%%%%%%%%%%%%%%%%%%%%%%%%%%%%%%%%%%%%%%%%%%%%%%%%%%%%%

\appendix
\pagebreak
\begin{center}
    \LARGE{Appendix for ``Massively Scalable Inverse \\ Reinforcement Learning \ifRedact for Route Optimization\else in Google Maps\fi''}
\end{center}

\paragraph{Notation}
Let $R \in \mathbb{R}^{|\mathcal{S}| \times |\mathcal{S}|}$ be the sparse reward matrix defined by $R_{ij} = r_\theta(s_i, s_j)$ .

\section{Negative results} \label{app:negative}
In this section, we provide negative results for two ideas to improve MaxEnt's scalability. We used a synthetic Manhattan-style street grid (i.e.\ GridWorld) in order to conduct controlled sweeps over the number of nodes. Edge rewards were fixed at constant values.

\subsection{Arnoldi iteration} \label{app:arnoldi}
The backward pass in MaxEnt is equivalent to performing power iteration to compute the dominant eigenvector of the graph
\begin{equation}
    A_{ij}=e^{R_{ij}}. \label{eq:power-iteration}
\end{equation}
We attempted to use Arnoldi iteration, as implemented in \texttt{ARPACK}, but found it to be numerically unstable due to lacking a log-space version \cite{Lehoucq1998}. In typical eigenvector applications, one is concerned with the reconstruction error $||Az - z||$, where $z=e^v$ is an eigenvector estimate of $A$. However, in MaxEnt we are uniquely concerned with relative error between entries in rows of $Q = R + \mathbbm{1 \cdot} v^\intercal$, since this determines the policy $\pi = \mathrm{softmax}(Q)$. We found log-space reconstruction error $||\log(Az) - \log(z)||$ to be far more significant in MaxEnt. As seen in \cref{fig:arnoldi-error}, Arnoldi iteration's reconstruction error in linear space is reasonable, but quickly blows up in log space, unlike log-space power iteration.
\begin{figure}[H]
\centering
  \includegraphics[width=1\linewidth]{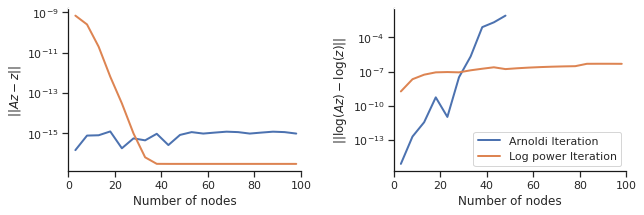}
  \caption{Solving for the dominant eigenvector in the MaxEnt backward pass.}
  \label{fig:arnoldi-error}
\end{figure}

From a geometric perspective, this phenomena can be observed by examining eigenvector estimates $z$ in \cref{fig:eigenvector}. Due to the MDP structure containing a single absorbing zero-reward destination node, eigenvector entries $z$ rapidly decay as one moves further away from the destination $s_d$. Thus, the typical reconstruction error $||Az - z||$ is primarily a function of a handful of nodes near the destination. Visually, Arnoldi and log-space power iteration appear to provide similar solutions for $z$. However, examining $\log(z)$ reveals that log-space power iteration is able to gracefully estimate the exponentially decaying eigenvector entries. Arnoldi iteration successfully estimates the handful of nodes near $s_d$, but further nodes are inaccurate and often invalid (white spaces indicate $z < 0$ and thus $\log(z)$ is undefined).

\begin{figure}[H]
     \centering
     \begin{subfigure}[b]{0.45\textwidth}
         \centering
         \includegraphics[width=\textwidth]{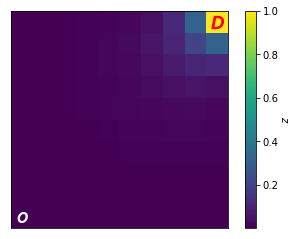}
         \includegraphics[width=\textwidth]{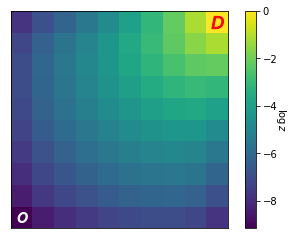}
         \caption{Log power iteration}
         \label{fig:power}
     \end{subfigure}
     \begin{subfigure}[b]{0.45\textwidth}
         \centering
         \includegraphics[width=\textwidth]{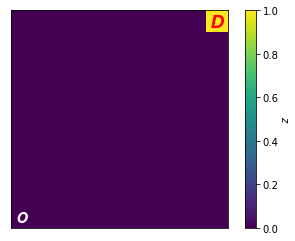}
         \includegraphics[width=\textwidth]{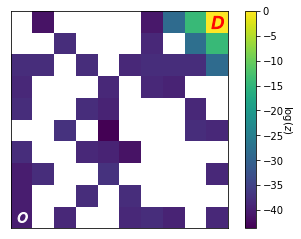}
         \caption{Arnoldi iteration}
         \label{fig:arnoldi}
     \end{subfigure}
     \caption{Eigenvector estimates on a $10 \times 10$ GridWorld environment for origin O and destination D.}
     \label{fig:eigenvector}
\end{figure}

\subsection{Matrix geometric series} \label{app:geometric-series}
The MaxEnt forward pass is a matrix geometric series with the closed-form solution defined in \cref{eq:closed-form}. Instead of iteratively computing the series, we used \texttt{UMFPACK} to directly solve for this solution. As shown in \cref{fig:geometric}, this worked very well for graphs up to approximately 10k nodes, but provided no further benefit beyond this point. Neither approach had numerical stability issues.
\begin{figure}[H]
\centering
  \includegraphics[width=0.5\linewidth]{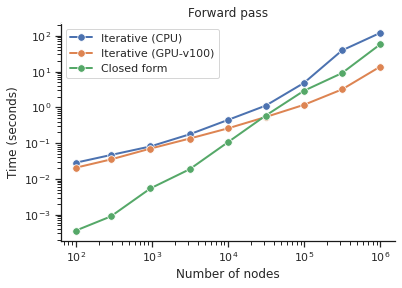}
  \caption{Closed-form solution of the MaxEnt forward pass using \texttt{UMFPACK}.}
  \label{fig:geometric}
\end{figure}

\subsection{Negative designs}
We dismiss two other designs based on key pieces of evidence. First, we could learn to rerank the ${\sim}5$ routes returned by the \ifRedact directions \else Maps \fi API (the approach taken by \textcite{Ziebart2008}). This is a trivial solution to scale, but significantly reduces the route accuracy headroom. The API returns a handful of candidate routes from $s_o$ to $s_d$, which then form $\mathcal{T}$, e.g.\ \cref{eq:maxentpp} or \cref{eq:rhip}. Instead of learning the best route from a set of infinitely many routes, IRL now learns the best route among ${\sim}5$. In many cases, the desired demonstration is often not among this set, and thus impossible to select in an online setting.
Second, we could train in a smaller geographic region and deploy worldwide. Not only does this preclude the use of the best-performing 360M parameter sparse reward function (as its parameters are unique to each edge), \cref{fig:cross-metro} indicates this would suffer from generalization error for other choices of $r_\theta$.

\section{MaxEnt theory} \label{app:maxent-convergence}
In this section, we provide theoretical results on MaxEnt, which are summarized here for convenience:
\begin{itemize}
    \item \cref{thm:dominant}: MaxEnt has finite loss if and only if the dominant eigenvalue $\lambda_\mathrm{max}$ of the graph is less than $1$. This was briefly noted in \textcite[pg. 117]{Ziebart2010}), which we discovered after preparation of this publication.
    \item \cref{thm:convex}: The set of finite losses is convex in the linear setting.
    \item \cref{thm:rate}: The convergence rate of the backward pass is governed by the second largest eigenvalue.
\end{itemize}

\cref{fig:example} illustrates the first two phenomena on a simple didactic MDP. Intuitively, there are two counteracting properties which determine the transition between finite and infinite loss. First, the number of paths of length $n+1$ is larger than the number of paths of length $n$. This tends to accumulate probability mass on the set of longer paths. Second, paths of length $n+1$ have on average lower per-path reward, and thus lower per-path probability, than paths of length $n$. This favors the set of shorter paths. The question is, does the per-path probability decrease faster than the rate at which the number of paths increases? If yes, then the dominant eigenvalue is 1, the forward pass converges, and the loss is finite. If no, then the dominant eigenvalue is greater than 1, the forward pass never converges, and loss is infinite. As expected, infinite loss occurs in regions of high edge rewards, which agrees with \cref{fig:example}.
\begin{figure}[h]
\centering
\begin{subfigure}{.5\textwidth}
  \centering
  \begin{tikzpicture}
    \node[shape=circle,draw=black] (A) at (0, 3) {$s_1$};
    \node[shape=circle,draw=black] (B) at (0, 0) {$s_2$};
    \node[shape=circle,draw=black] (D) at (2.5, 1.5) {$s_d$};
    \path [->] (A) edge [loop above] node[above] {$\theta_1$} (A);
    \path [->] (A) edge [bend right] node[left] {$\theta_1$} (B);
    \path [->] (B) edge [bend right] node[right] {$\theta_2$} (A);
    \path [->] (B) edge [loop below] node[below] {$\theta_2$} (B);
    \path [->] (A) edge node[above] {$1$} (D);
    \path [->] (B) edge node[below] {$1$} (D);
    \path [->] (D) edge [loop right] node[right] {$0$} (D);
  \end{tikzpicture}
  \caption{Simple MDP}
  \label{subfig:mdp}
\end{subfigure}\\
\begin{subfigure}{.5\textwidth}
  \centering
  \includegraphics[width=1\linewidth]{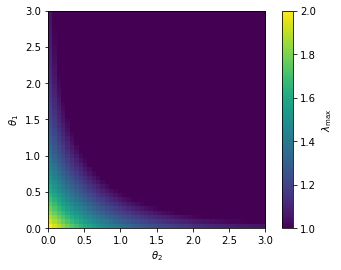}
  \caption{Dominant eigenvalue}
  \label{subfig:eigenvalue}
\end{subfigure}%
\begin{subfigure}{.5\textwidth}\textbf{}
  \centering
  \includegraphics[width=1\linewidth]{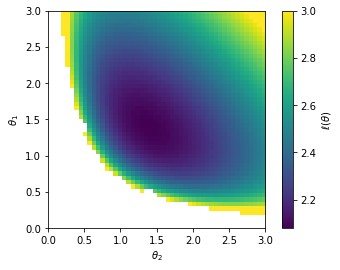}
  \caption{MaxEnt loss}
  \label{subfig:loss}
\end{subfigure}
\caption{Demonstration of our claims on a simple MDP. (a) The deterministic MDP with absorbing destination node $s_d$, and parameterized edge costs (i.e.\ negative reward) $\theta_1$ and $\theta_2$. (b) The dominant eigenvalue of the graph as a function of $\theta$. Note the set of parameters $\Theta = \left\{ \theta: \lambda_{\max}(\theta) = 1 \right\}$ is convex. (c) The corresponding MaxEnt loss (for a dataset with two demonstration paths $s_1 \to s_2 \to s_d$ and $s_2 \to s_1 \to s_d$). As explained by our theory, the loss is finite and convex when $\lambda_{\max}=1$ and infinite when $\lambda_{\max} > 1$.}
\label{fig:example}
\end{figure}

\subsection{Preliminaries}
Let $A_{ij} = e^{R_{ij}}$ be the elementwise exponential rewards. We restrict our attention to linear reward functions $R_{ij} = \theta^\intercal x_{ij}$ where $x_{ij}$ is set of edge features between nodes $i$ and $j$. If no edge exists between nodes $i$ and $j$, then the reward is negative infinite and $A_{ij} = 0$.
Without loss of generality, we assume the destination node is the last row in $A$. Thus, $A$ is block upper triangular (also called the \emph{irreducible normal form} or \emph{Frobenius normal form})
\begin{equation*}
    A = \left[\begin{matrix}
    B_1 & *\\
    0 & B_2
    \end{matrix}\right]
\end{equation*}
where $B_1$ contains all the non-destination nodes and $B_2 = \left[ \begin{matrix} 1 \end{matrix} \right]$ is the self-absorbing destination node. $B_1$ and $B_2$ are both strongly connected (thus irreducible) and regular. $B_2$ has a single eigenvalue $\lambda_{\max}(B_2)=1$, and by the Perron-Frobenius theorem $B_1$ has
a dominant real eigenvalue $\lambda_{\max}(B_1)$ (called the Perron-Frobenius eigenvalue of $B_1$).

Likewise, we express an eigenvector $z$ of $A$ as
\begin{equation} \label{eq:block-eigenvector}
\left[\begin{matrix}
B_1 & *\\
0 & B_2
\end{matrix}\right]\left[\begin{matrix}
z_1\\
z_2
\end{matrix}\right] = \lambda \left[\begin{matrix}
z_1\\
z_2
\end{matrix}\right]
\end{equation}
and the MaxEnt policy transition matrix as
\begin{equation*}
    \pi = \left[\begin{matrix}
    P_1 & *\\
    0 & P_2
    \end{matrix}\right]
\end{equation*}
where $P_2 = \left[ \begin{matrix} 1 \end{matrix} \right]$ corresponds to the self-absorbing edge at the destination, and $P_1$ corresponds to all other nodes.

The MaxEnt backward pass is unnormalized power iteration. After $k$ iterations, $z^{(k)} = A^k \mathbbm{I}_{s_d}$ where $\mathbbm{I}_{s_d}$ is the one-hot vector at the destination node. This is exactly (unnormalized) power iteration.

Likewise, the forward pass is a matrix geometric series. The \emph{fundamental matrix} $S$ \cite{Hubbard2003} is defined as
\begin{align}
    S &= I + P_1 + P_1^2 + \cdots \nonumber \\
      &= (I-P_1)^{-1} \label{eq:closed-form}
\end{align}
Then the forward pass is simply $\mathbbm{I}_{s_o}^\intercal S$, where $\mathbbm{I}_{s_o}$ is the one-hot vector at the origin. If $P_1$ is sub-stochastic, then the series converges and $(I-P_1)$ is nonsingular. With these preliminaries, we now proceed to our main results.

\subsection{Main results}
\begin{theorem} \label{thm:dominant}
$\ell(\theta)<\infty$ iff $A$ has a dominant eigenvalue of 1.
\end{theorem}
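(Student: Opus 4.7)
My plan is to reduce finiteness of $\ell(\theta)$ to convergence of a Neumann series, then read off the eigenvalue condition from the block structure of $A$. Finiteness of the MaxEnt loss is equivalent to finiteness of the partition function $Z(s_i) = \sum_{\tau \in \mathcal{T}_{s_i, s_d}} e^{r(\tau)}$ at every non-destination $s_i$, since the NLL of a demonstration from $s_o$ to $s_d$ is $-r(\tau) + \log Z(s_o)$. The vector $z$ with $z_i = Z(s_i)$ satisfies the Bellman fixed-point $z = A z$ with $z_{s_d} = 1$, exactly as in Equation~\ref{eq:block-eigenvector}.

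Writing $z = (z_1, 1)^{\intercal}$ in block form, the fixed-point collapses to $z_1 = B_1 z_1 + a$, where $a \ge 0$ is the vector of edge weights from non-destination nodes into $s_d$. Iterating gives $z_1 = \sum_{k=0}^{\infty} B_1^{k} a$, so $\ell(\theta) < \infty$ is equivalent to entrywise convergence of the Neumann series $\sum_{k \ge 0} B_1^k a$. Because $B_1$ is nonnegative and irreducible, a standard Perron--Frobenius / Neumann series argument shows this series converges (to $(I - B_1)^{-1} a$) if and only if $\lambda_{\max}(B_1) < 1$: the ``if'' direction uses the bound $\|B_1^k\| = O(\lambda_{\max}(B_1)^k)$, while the ``only if'' direction uses that a strictly positive Perron eigenvector $w$ with $B_1 w \ge w$ forces the partial sums to blow up on the support of $a$, which is nonempty as long as at least one node borders $s_d$.

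Finally, the block triangular structure of $A$ implies that its spectrum is the union of the spectra of $B_1$ and $B_2 = [1]$, so $\lambda_{\max}(B_1) < 1$ is exactly the condition that $1$ is the strict dominant eigenvalue of $A$. Reading the theorem's ``dominant eigenvalue of $1$'' as ``$1$ is the strict Perron eigenvalue of $A$'' (consistent with the didactic picture in Figure~\ref{fig:example}) yields the claimed equivalence. The main subtlety I anticipate is the boundary case $\lambda_{\max}(B_1) = 1$, where the spectral radius of $A$ is still $1$ but the loss diverges; the Perron-eigenvector argument above is precisely what rules this out of the finite-loss regime, so the cleanest way to close the equivalence is to commit to this strict interpretation of ``dominant'' and verify that $a$ has nonzero projection onto the invariant direction of $B_1$ under the mild assumption that the destination is reachable.
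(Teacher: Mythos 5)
Your proof is correct, but it takes a genuinely different route from the paper's. The paper argues through the \emph{induced policy}: it splits into cases on $\lambda_{\max}(B_1)$, shows via \cref{eq:block-eigenvector} that the dominant eigenvector of $A$ has positive (resp.\ zero) destination component when $\lambda_{\max}(B_1)<1$ (resp.\ $>1$), concludes that $P_1$ is (resp.\ is not) substochastic, and then ties finiteness of the loss to convergence of the forward-pass geometric series $(I-P_1)^{-1}$ from \cref{eq:closed-form}. You instead bypass the policy entirely and work with the unnormalized weight matrix: finiteness of the loss is finiteness of the partition function, the partition vector is the monotone limit of $\sum_k B_1^k a$, and convergence of that Neumann series is equivalent to $\lambda_{\max}(B_1)<1$ via the left Perron eigenvector pairing $w^\intercal a>0$. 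Each approach buys something: the paper's version directly explains the \emph{observed} failure mode (the learned policy stops reaching $s_d$ and the roll-out diverges), which is the phenomenon motivating the theorem; your version is more self-contained, avoids the intermediate claim about eigenvector positivity, and --- notably --- settles the boundary case $\lambda_{\max}(B_1)=1$ that the paper explicitly declines to address (its ``Case 3''), since $\sum_k \lambda_{\max}^k\, (w^\intercal a)$ already diverges at $\lambda_{\max}=1$; this confirms that $\Theta=\{\theta:\lambda_{\max}(B_1)<1\}$ and hence that $\Theta$ is open, as the paper conjectures. The assumptions you flag (irreducibility of $B_1$ and $a\neq 0$, i.e.\ reachability of $s_d$) are exactly those the paper also imposes in its preliminaries, so nothing is lost.
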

\begin{proof}

The spectrum of $A$, denoted $\sigma(A)$, is the union of $\sigma(B_1)$ and $\sigma(B_2)$, including multiplicities.\footnote{by 8.3.P8 in \textcite{Horn2012}} Thus, there are three possible cases:
\begin{enumerate}
    \item \textbf{Case 1}: $\lambda_{\max}(B_1) < 1$. In this case, there exists a dominant eigenvector with eigenvalue $1$ from $B_2$. This eigenvector must satisfy $z_2 > 0$ (proof by contradiction: assume $z_2 = 0$, then $B_1 z_1 = z_1$, which contradicts $\lambda_{\max}(B_1) < 1$). Since $z_2 > 0$, $P_1$ is substochastic (there is some probability of transitioning to the destination node), the matrix geometric series $(I + P_1 + P_1^2 + \cdots)$ converges, and the loss is finite.
    
    \item \textbf{Case 2}: $\lambda_{\max}(B_1) > 1$. There exists a dominant eigenvector with eigenvalue $\lambda_{\max}(B_1)$. Then by \cref{eq:block-eigenvector}, $z_2$ must equal zero. By the MaxEnt policy definition, the policy will never transition to the destination ($\pi_{iN} = 0 \;\forall i \neq N$), the forward pass geometric series diverges, and thus $\ell(\theta) = \infty$.

    \item \textbf{Case 3}: $\lambda_{\max}(B_1) = 1$. This determines whether the set is open or closed. We do not address this question in this paper, although we strongly suspect the set is open.
\end{enumerate}
In both Cases 1 and 2, there exists a dominant eigenvector. By standard power iteration analysis, the MaxEnt backward pass is guaranteed to converge to this dominant eigenvector. The MaxEnt loss $\ell(\theta)$ is finite in Case 1 when $A$ has a dominant eigenvalue of 1, and infinite in Case 2.
\end{proof}

Computing the dominant eigenvalue may be expensive. Fortunately, it is often possible to cheaply verify Case 1 holds by use of the following simple inequalities\footnote{by Lemma 8.1.21 and 8.1.P7 in \textcite{Horn2012}.}
\begin{align*}
    &\lambda_{\max}(B_1) \leq \max_i r_i \sum_j B_{1, ij}r_j \leq \max_i r_i \\ &\lambda_{\max}(B_1) \leq \max_j c_j \sum_i B_{1, ij}c_i \leq \max_j c_j
\end{align*}
where $r_i = \sum_j B_{1, ij}$ and $c_j = \sum_i B_{1, ij}$ are the row and column sums, respectively.

%%%%%%%%%%%%%%%%%%%%%%%%%%%%%%%%%%%%%%%%%%%%

\begin{theorem} \label{thm:convex}
$\Theta = \left\{\theta: \ell(\theta) < \infty \right\}$ is an open convex set.
\end{theorem}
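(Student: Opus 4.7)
The plan is to combine Theorem A.1 with Kingman's classical log-convexity theorem for the Perron-Frobenius eigenvalue. By Theorem A.1 and the three-case analysis in its proof, $\Theta$ coincides (modulo the boundary Case 3) with the strict sublevel set $\{\theta : \lambda_{\max}(B_1(\theta)) < 1\}$, where $B_1(\theta)_{ij} = e^{\theta^\intercal x_{ij}}$ on non-destination edges and zero elsewhere. Both openness and convexity then reduce to properties of this sublevel set.

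For openness, I would invoke continuity. Each entry $B_1(\theta)_{ij}$ is a smooth function of $\theta$, and the spectral radius is continuous in the matrix entries (as a root of the characteristic polynomial, whose coefficients are polynomial in those entries). Hence $\theta \mapsto \lambda_{\max}(B_1(\theta))$ is continuous, and the preimage of the open interval $(-\infty, 1)$ under a continuous map is open.

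For convexity, I would apply Kingman's theorem: if the entries of a non-negative matrix are log-convex functions of a parameter, then $\lambda_{\max}$ is log-convex in that parameter. In our setting $B_1(\theta)_{ij} = e^{\theta^\intercal x_{ij}}$ is log-linear, hence trivially log-convex. Kingman's theorem then gives that $\log \lambda_{\max}(B_1(\theta))$ is convex in $\theta$, so $\Theta = \{\theta : \log \lambda_{\max}(B_1(\theta)) < 0\}$ is the strict sublevel set of a convex function, which is convex.

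The main obstacle is rigorously disposing of Case 3 from Theorem A.1, i.e.\ showing that the boundary $\lambda_{\max}(B_1(\theta)) = 1$ produces infinite loss, so that $\Theta$ really equals the strict (rather than non-strict) sublevel set. Here I would use the Perron-Frobenius theorem: the irreducible, non-negative matrix $B_1$ admits a positive eigenvector $w > 0$ with $B_1 w = w$, and a short calculation shows that the row-normalized policy sub-matrix $P_1$ inherits spectral radius $1$. Consequently $P_1^k$ does not vanish, the fundamental-matrix geometric series $\sum_{k=0}^\infty P_1^k$ diverges entrywise along the Perron direction, the forward-pass occupancy measure blows up, and $\ell(\theta) = \infty$. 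Tying $\lambda_{\max}(P_1)$ back to $\lambda_{\max}(B_1)$ is the delicate step, since $P_1$ is obtained from $B_1$ by a row-normalization that also involves the destination column; once that link is established, everything else follows from standard continuity and Kingman's log-convexity.
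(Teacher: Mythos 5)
Your convexity argument and the paper's rest on the same pillar --- reduce $\Theta$ to the sublevel set $\{\theta:\lambda_{\max}(B_1(\theta))<1\}$ and invoke Kingman's log-convexity theorem --- but you apply it in a different and, as it happens, more defensible form. The paper factors $\lambda_{\max}(B_1)=h(g(f(\theta)))$ and argues by composition rules, treating $h=\lambda_{\max}$ as a convex increasing function of the matrix entries; monotonicity is fine (Perron--Frobenius), but the spectral radius is \emph{not} convex in the entries of a nonnegative matrix (e.g.\ $\rho$ of the $2\times 2$ antidiagonal matrix with entries $a,b$ is the geometric mean $\sqrt{ab}$, which is concave), so the composition step does not hold as literally stated. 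Your version --- the entries $e^{\theta^\intercal x_{ij}}$ are log-linear, hence log-convex in $\theta$, hence $\lambda_{\max}$ is log-convex in $\theta$ by Kingman, hence the strict sublevel set is convex --- is exactly what Kingman's theorem licenses and is the correct way to run this argument. You are also right that the boundary case is the genuine obstruction: the paper's proof of this theorem asserts the equivalence $\Theta=\{\theta:\lambda_{\max}(B_1)<1\}$ even though its own proof of the preceding theorem explicitly declines to resolve Case 3 ($\lambda_{\max}(B_1)=1$), and it never argues openness at all; both openness and, strictly speaking, convexity of $\Theta$ itself (rather than of the strict sublevel set) need that case settled. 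Your sketch for Case 3 is in the right spirit but detours through $P_1$, whose row-normalization is ill-defined when the backward pass diverges; a more direct route is to note that $z^{(k)}$ restricted to non-destination nodes equals $\bigl(\sum_{j=0}^{k-1}B_1^j\bigr)u$ with $u\geq 0$, $u\neq 0$ the column of rewards into $s_d$, and pairing with the positive left Perron vector of the irreducible $B_1$ at $\lambda_{\max}(B_1)=1$ shows this grows linearly in $k$, so the partition function is infinite and $\ell(\theta)=\infty$ on the boundary. With that filled in, your continuity argument for openness is standard and correct.
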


\begin{proof}
From the proof of \cref{thm:dominant}, we have the equivalence
\begin{equation} \label{eq:sublevel}
    \Theta = \left\{ \theta: \lambda_{\max}(B_1) < 1 \right \}.
\end{equation}

Let
\begin{equation*}
\lambda_{\max}(B_1) = h(g(f(\theta)))
\end{equation*}
where 
\begin{center}
\begin{tabular}{ c | c | c}
 Definition & Name & Relevant properties \\
 \hline
 $f(\theta) = X\theta$ & Edge rewards & Linear \\
 $g(R) = e^R$ & Elementwise exponential & Convex, increasing \\  
 $h(B_1) = \lambda_{\max}(B_1)$ & Dominant eigenvalue & Log-convex, increasing
\end{tabular}
\end{center}
The only non-obvious fact is that $h$ is convex and increasing. Log-convexity follows from \textcite{Kingman1961}. It is increasing by the Perron-Frobenius theorem for non-negative irreducible matrices.

The proof follows standard convexity analysis \cite{Boyd2004}. The function $g\circ f$ is convex since $f$ is convex and $g$ is convex and increasing. The function $h\circ g\circ f$ is convex since $g\circ f$ is convex and $h$ is convex and increasing. Finally, since $\lambda_{\max}(B_1) = h\circ g\circ f$ is convex in $\theta$, the sublevel set in \cref{eq:sublevel} is also convex.
\end{proof}

%%%%%%%%%%%%%%%%%%%%%%%%%%%%%%%%%%%%%%%%%%%%

\begin{theorem}
The error of the MaxEnt backward pass after $k$ iterations is
\begin{equation}
    || z^{(k)} - z || = \mathcal{O}\left( \left| \frac{\lambda_2}{\lambda_1} \right| ^k \right)
\end{equation}
where $\lambda_1$ is the dominant eigenvalue of $A$ ($\lambda_1 = 1$ in Case 1), and $\lambda_2 < \lambda_1$ is the second largest eigenvalue.
\label{thm:rate}
\end{theorem}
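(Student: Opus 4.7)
The plan is to reduce the claim to the standard convergence analysis of (unnormalized) power iteration, leveraging the fact established earlier that the backward pass is exactly $z^{(k)} = A^k \mathbbm{I}_{s_d}$. First I would assume, for the main line of argument, that $A$ is diagonalizable with eigenpairs $(\lambda_i, v_i)$ ordered so that $|\lambda_1| > |\lambda_2| \ge \cdots \ge |\lambda_n|$, and expand the initial vector in the eigenbasis as $\mathbbm{I}_{s_d} = \sum_{i} c_i v_i$. Applying $A$ repeatedly then yields
\begin{equation*}
    z^{(k)} = \sum_{i=1}^n c_i \lambda_i^k v_i = \lambda_1^k\!\left( c_1 v_1 + \sum_{i=2}^n c_i \left(\tfrac{\lambda_i}{\lambda_1}\right)^{\!k} v_i \right),
\end{equation*}
so after identifying the limiting solution $z$ with the appropriate multiple $c_1 v_1$ of the dominant eigenvector (rescaled to match the normalization of $z$ implied by the algorithm), the residual $z^{(k)}-z$ is exactly the tail sum, which is bounded in norm by a constant times $|\lambda_2/\lambda_1|^k$. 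In Case 1 of Theorem A.1 we have $\lambda_1=1$ so this specializes to $O(|\lambda_2|^k)$ as stated.

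The next step is to justify that the dominant coefficient $c_1$ is nonzero, otherwise the expansion would collapse to a smaller eigencomponent and the claimed rate would be vacuous. Using the biorthogonality of left and right eigenvectors, $c_1 = (u_1^\intercal \mathbbm{I}_{s_d})/(u_1^\intercal v_1)$, where $u_1$ is the left eigenvector for $\lambda_1$. In Case 1 the dominant eigenvector arises from the $B_2$ block (as in the proof of Theorem A.1), so the corresponding left eigenvector has a strictly positive entry in the destination coordinate; hence $u_1^\intercal \mathbbm{I}_{s_d} \neq 0$ and $c_1 \neq 0$. This is the place where the block-triangular structure of $A$ used earlier is essential.

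The main obstacle I expect is the non-diagonalizable case: if $\lambda_2$ (or any subdominant eigenvalue) sits in a nontrivial Jordan block of size $m$, then applying $A^k$ produces factors of $\binom{k}{j}\lambda_2^{k-j}$ for $j<m$, which introduces a polynomial prefactor and gives $\|z^{(k)}-z\| = O(k^{m-1}|\lambda_2/\lambda_1|^k)$. For any $|\lambda_2/\lambda_1|<1$ this polynomial factor is absorbed into the $O(\cdot)$ by slightly inflating the base, so the theorem still holds as stated, but it is worth handling this carefully via the Jordan form of $A$ rather than glossing over diagonalizability. The rest of the argument is routine linear algebra together with the structural facts about $A$ already established in the preceding theorems.
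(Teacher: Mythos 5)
Your proposal is correct and matches the paper's approach: the paper's entire proof is a one-line appeal to ``standard power iteration analysis'' (citing Theorem 27.1 of Trefethen and Bau), and you have simply written out that standard argument in full. Your additional care about the non-diagonalizable case and about verifying $c_1 \neq 0$ via the block-triangular structure of $A$ goes beyond what the paper records, and is in fact warranted since $A$ is not symmetric, but it does not constitute a different method.
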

\begin{proof}
This follows directly from standard power iteration analysis, e.g. Theorem 27.1 in \textcite{Trefethen1997}.
\end{proof}

\subsection{Proof of \cref{eq:maxentpp}} \label{app:maxentpp}
In \cref{sec:methods}, we claim our MaxEnt\Plus\Plus~ initialization is closer to the desired solution than the original MaxEnt initialization
{\setlength{\belowdisplayskip}{6pt}\begin{equation*}
   \underbrace{\mystrut{3.5ex}\mathbbm{I}_{s=s_d}}_{\text{MaxEnt initialization}} \leq
   \underbrace{\mystrut{3.5ex}\min_{\tau \in \mathcal{T}_{s, s_d}} e^{r(\tau)}}_{\text{MaxEnt\Plus\Plus~ initialization}} \leq
   \underbrace{\mystrut{0ex}\sum_{\tau \in \mathcal{T}_{s, s_d}} e^{r(\tau)} = v}_{\text{Solution}}
\end{equation*}}The MaxEnt initialization $\mathbbm{I}_{s=s_d}$ is zero everywhere, except at $s_d$ where it equals one. The MaxEnt\Plus\Plus~ initialization $\min_{\tau \in \mathcal{T}_{s, s_d}} e^{r(\tau)}$ is non-negative (due to the exponent), and is equal to one at $s_d$ (since there is a single, zero-reward, self-absorbing edge at the destination). Thus, the MaxEnt\Plus\Plus~ initialization upper bounds the MaxEnt initialization. The solution $\sum_{\tau \in \mathcal{T}_{s, s_d}} e^{r(\tau)}=v$ upper bounds the MaxEnt\Plus\Plus~ initialization by the inequality $\min_x f(x) \leq \sum_x f(x)$, where $f(x) \geq 0 \enskip \forall x$.

Note the claim in \cref{eq:maxentpp} is stronger than claiming a smaller vector distance.

\section{Baseline algorithms} \label{app:baselines}
In this section, we summarize the baseline IRL algorithms and provide reductions from \textsc{rhip}.

\subsection{MaxEnt\Plus\Plus~ and MaxEnt} \label{app:baseline-maxent}
MaxEnt \cite{Ziebart2008} assumes a softmax distribution over trajectories.
\begin{gather}
    \pi(\tau) \propto e^{r(\tau)} \label{eq:maxent-traj} \\
    \pi(a | s) \propto \sum_{\tau \in \mathcal{T}_{s, a}} e^{r(\tau)} \label{eq:maxent-stateaction}
\end{gather}
where $\mathcal{T}_{s, a}$ is the set of all trajectories which begin with state-action pair $(s, a)$. Computing the gradient of the log-likelihood of \cref{eq:maxent-traj} results in dynamic programming \cref{alg:maxent}.

\begin{algorithm}[H]
\caption{MaxEnt\Plus\Plus~ and MaxEnt}\label{alg:maxent}
\begin{algorithmic}
\State \textbf{Input:} Reward $r_\theta$, demonstration $\tau$ with origin $s_o$ and destination $s_d$
\State \textbf{Output:} Parameter update $\nabla_{\theta} f$
\State 
\State \texttt{\# Policy estimation}
\State $v^{(0)}(s) \gets \text{\textsc{Dijkstra}}(r_\theta, s, s_d)$ \Comment{\cref{eq:maxentpp}. The original MaxEnt uses $\log\mathbbm{I}[s=s_d]$}
\For{$h = 1, 2, \dotsc$}
\State $Q^{(h)}(s, a) \gets  r_\theta(s, a) + v^{(h-1)}(s')$
\State $v^{(h)}(s) \gets \log \sum_a \exp Q^{(h)}(s, a)$
\EndFor
\State $\pi(a | s) \gets \frac{\exp Q^{(h)}(s, a)}{\sum_{a'} \exp Q^{(h)}(s, a')}$ \Comment{\cref{eq:maxent-traj}}
\State
\State \texttt{\# Roll-out policy}
\State $\rho^\theta \gets \mathrm{Rollout}(\pi, s_o)$
\State
\State $\nabla_{\theta} f \gets \sum_{s, a}(\rho_\tau(s, a) - \rho^\theta(s, a)) \nabla_\theta r_\theta(s, a)$
\State \Return $\nabla_{\theta} f$
\end{algorithmic}
%\caption{}
\end{algorithm}

The reduction from \textsc{rhip} to MaxEnt follows directly from \cref{eq:rhip} with $H{=}\infty$. Recall $\pi_s$ is the MaxEnt\Plus\Plus~ policy after $H$ iterations, and let $\pi_\infty$ denote the fully converged MaxEnt\Plus\Plus~ policy (equivalent to the fully converged MaxEnt policy)
\begin{align*}
    \pi(a|s) &\propto \sum_{\tau \in \mathcal{T}_{s, a}} \pi_d(\tau_{H+1}) \prod_{h=1}^H \pi_s(a_h | s_h) \tag*{\textsc{rhip}} \\
    &= \sum_{\tau \in \mathcal{T}_{s, a}} \prod_{h=1}^\infty \pi_\infty(a_h | s_h) \\
    &= \sum_{\tau \in \mathcal{T}_{s, a}} \pi_\infty(\tau) \tag*{MaxEnt\Plus\Plus~ and MaxEnt}
\end{align*}

The careful reader will notice a subtle distinction between \cref{alg:rhip} (\textsc{rhip}) and \cref{alg:maxent} (MaxEnt). The former computes the gradient via \cref{eq:maxent-stateaction}, whereas the latter computes the gradient via \cref{eq:maxent-traj}. The resulting gradients are identical, however, the latter results in a simpler algorithm. Mechanically, \cref{alg:rhip} with $H{=}\infty$ reduces to the simpler \cref{alg:maxent} via a telescoping series.

\subsection{Bayesian IRL}
Bayesian IRL \cite{Ramachandran2007} assumes the policy follows the form
\begin{equation}
    \pi(a|s) \propto e^{Q^*(s, a)} \label{eq:birl}
\end{equation}
where $Q^*(s, a)$ is the Q-function of the optimal policy. In our setting, $\pi_d$ is the optimal policy, and thus $Q^*(s, a)$ is the reward of taking $(s, a)$ and then following $\pi_d$, i.e.\ $Q^{(0)}$ in \cref{alg:rhip}. Computing the gradient of the log-likelihood of \cref{eq:birl} results in \cref{alg:birl}

\begin{algorithm}[H]
\caption{Bayesian IRL} \label{alg:birl}
\begin{algorithmic}
\State \textbf{Input:} Reward $r_\theta$, demonstration $\tau$ with origin $s_o$ and destination $s_d$
\State \textbf{Output:} Parameter update $\nabla_{\theta} f$
\State 
\State \texttt{\# Backup policy}
\State $v(s) \gets \mathrm{\textsc{Dijkstra}}(r_\theta, s, s_d)$
\State $Q(s, a) \gets r_\theta(s, a) + v(s')$
\State $\pi(a | s) \gets \frac{\exp Q(s, a)}{\sum_{a'} \exp Q(s, a')}$ \Comment{\cref{eq:birl}}
\State
\State \texttt{\# Roll-out policy}
\State $\rho^\theta \gets \mathrm{Rollout}(\pi, \rho_\tau^s)$
\State $\rho^* \gets \mathrm{Rollout}(\pi_{2:\infty}, \rho_{\tau_{2:\infty}}^{s}) + \rho_\tau^{sa}$
\State
\State $\nabla_{\theta} f \gets \sum_{s, a}(\rho^*(s, a) - \rho^\theta(s, a)) \nabla_\theta r_\theta(s, a)$
\State \Return $\nabla_{\theta} f$
\end{algorithmic}
%\caption{}
\end{algorithm}

The reduction from \textsc{rhip} to BIRL directly follows from \cref{eq:rhip} with $H{=}1$.
\begin{align*}
    \pi(a|s) &\propto \sum_{\tau \in \mathcal{T}_{s, a}} \pi_d(\tau_{H+1}) \prod_{h=1}^H \pi_s(a_h | s_h) \tag*{\textsc{rhip}} \\
    &= \sum_{\tau \in \mathcal{T}_{s, a}} \pi_d(\tau_{H+1}) \pi_1(a_h | s_h) \\
    &= \sum_{\tau \in \mathcal{T}_{H, s, a}} \pi_1(a_h | s_h) \\
    &= \sum_{\tau \in \mathcal{T}_{H, s, a}} \pi_1(a_h | s_h) \\
    &= e^{Q^{(0)}(s, a)} = e^{Q^*(s, a)}  \tag*{BIRL}
\end{align*}
where $\mathcal{T}_{H, s, a}$ is the set of all paths which begin with a state-action pair $(s, a)$ and deterministically follow the highest reward path after horizon $H$.

\subsection{Max Margin Planning}
Max Margin Planning \cite{Ratliff2006,Ratliff2009} assumes a loss $\ell$ of the form
\begin{align}
    \ell(\theta) &= \underbrace{\mystrut{2ex}\sum_{s, a}\rho_\tau^{sa}(s, a)r_\theta(s, a)}_{\text{Demonstration path reward}} - \underbrace{\mystrut{2ex}\max_{\tau' \in \mathcal{T}}\sum_{s, a}\rho_{\tau'}^{sa}(s, a)(r_\theta(s, a) + m_\tau(s, a))}_{\text{Margin-augmented highest reward path}} \label{eq:mmp} \\
    &= \sum_{s, a} (\rho_\tau(s, a) - \rho_{\tau_\text{sp}}(s, a)) r_\theta(s, a) \nonumber
\end{align}
where $\mathcal{T}$ is the set of all paths from $s_o$ to $s_d$, $m_\tau$ is the margin term, and $\tau_\text{sp}$ is the margin-augmented highest reward path, i.e.\ $\tau_\text{sp} = \mathrm{\textsc{Dijkstra}}(r_\theta + m_\tau, s_o, s_d)$. The resulting MMP algorithm is shown in \cref{alg:mmp}.

\begin{algorithm}[H]
\caption{Max Margin Planning (MMP)} \label{alg:mmp}
\begin{algorithmic}
\State \textbf{Input:} Reward $r_\theta$, demonstration $\tau$ with origin $s_o$ and destination $s_d$
\State \textbf{Output:} Parameter update $\nabla_{\theta} f$
\State 
\State \texttt{\# Estimate and roll-out policy}
\State $\rho_{\tau_\text{sp}} \gets \mathrm{\textsc{Dijkstra}}(r_\theta + m_\tau, s_o, s_d)$ \Comment{\cref{eq:mmp}}
\State
\State $\nabla_{\theta} f \gets \sum_{s, a}(\rho_\tau(s, a) - \rho_{\tau_\text{sp}}(s, a)) \nabla_\theta r_\theta(s, a)$
\State \Return $\nabla_{\theta} f$
\end{algorithmic}
\end{algorithm}

The reduction from \textsc{rhip} to MMP is possible by directly examining \cref{alg:rhip} with $H=0$ and absorbing the margin term into the reward function, i.e.\ $r'(\theta) = r(\theta) + m_\tau$. With $H{=}0$, $\pi = \pi_d$ in \cref{alg:rhip}. Beginning with the $\rho^* - \rho^\theta$ term in \cref{alg:rhip}
\begin{align*}
    \rho^* - \rho^\theta &=  \mathrm{Rollout}(\pi_{2:\infty}, \rho_{\tau_{2:\infty}}^s) + \rho_\tau^{sa} -\mathrm{Rollout}(\pi, \rho_\tau^s) \tag*{\textsc{rhip}} \\
    &=  \mathrm{Rollout}(\pi_d, \rho_{\tau_{2:\infty}}^s) + \rho_\tau^{sa} -\mathrm{Rollout}(\pi_d, \rho_\tau^s) \tag*{$H{=}0$} \\
    &= \rho_\tau^{sa} - \mathrm{Rollout}(\pi_d, s_o) \tag*{Telescoping series} \\
    &= \rho_\tau^{sa} - \rho_{\tau_\text{sp}}^{sa} \tag*{MMP}
\end{align*}

\section{Experiments} \label{app:experiments}

\subsection{Dataset}
The demonstration dataset contains 110M and 10M training and validation samples, respectively. Descriptive statistics of these routes are provided below.
\begin{table}[H]
\centering
\begin{tabular}{lrrr}
\toprule
Travel model & Distance (km) & Duration (min) & Road segments per demonstration \\
\hline
Drive & 9.7 & 13.3 & 99.5 \\
Two-wheeler & 3.0 & 8.3 & 47.4 \\
\bottomrule
\end{tabular}
\caption{Dataset summary statistics}
\label{tab:dataset-statistics}
\end{table}

GPS samples are matched to the discrete road graph using a hidden Markov model. For data cleaning, we relied on several experts who had at least 5 years of experience in curating routing databases. This is similar to data cleaning processes followed in prior work, for example in \textcite{Derrow2021}.

\subsection{Hyperparameters} \label{app:hparams}
The full hyperparameter sweep used in \cref{tab:results} (excluding global result in bottom row) is provided in \cref{tab:hparams}.
\begin{table}[H]
\centering
\begin{tabular}{ cccc}
\toprule
 Category & Hyperparameter & Values & Applicable methods\\
 \hline
 Policy & Softmax temperature & 10, 20, 30 & MaxEnt, BIRL\\
 & Horizon & 2, 10, 100 & RHIP\\
 & Margin & 0.1, 0.2, 0.3 & MMP\\
 & Fixed bias & Margin+0.001 & MMP\\
 Reward model & Hidden layer width & 18 & DNN\\
 & Hidden layer depth & 2 & DNN\\
 &  Initialization & $(1\pm.02) * \texttt{ETA+penalties}$ & Linear, DNN\\
 & Weight initialization & 0 & SparseLin\\
 & L1 regularization & 1e-7 & SparseLin\\
 Optimizer & SGD learning rate & 0.05, 0.01 & Linear, DNN\\
 & Adam learning rate & 1e-5, 1e-6 & SparseLin\\
 & Adam $\beta_1$ & 0.99 & SparseLin\\
 & Adam $\beta_2$ & .999 & SparseLin\\
 & Adam $\epsilon$ & 1e-7 & SparseLin\\
 & LR warmup steps & 100& All\\
 Training & Steps per epoch & 100 & All\\
 & Batch size & 8 & All\\
 & Epochs & 200, 400 & All \\
 Data compression & Max out degree & 3 & All\\
 \bottomrule
\end{tabular}
\label{tab:hparams}
\caption{Hyperparameters.}
\end{table}

The worldwide 360M parameter \textsc{rhip} model (bottom row of \cref{tab:results}) uses the same parameters as \cref{tab:hparams}, except that we only swept over the horizon parameter in $H \in \{10, 100\}$ and the SGD learning rate, for efficiency. Temperature was fixed at 30 for linear and 10 for sparse. Adam learning rate was fixed at 0.05.

\subsection{Statistical significance analysis} \label{app:pvalue}
The $p$-values for \cref{tab:results} are computed as follows:
\paragraph{Accuracy}
We used a two-sided difference of proportions test \cite{Nist2009}. Our validation set in the experimental region contains $n=360,000$ samples. \textsc{rhip} outperforms the next-best drive and two-wheeler policies by .0023 ($p$-value=.051) and .0018 ($p$-value=.122), respectively. Note that since policies share a validation set, the routes on which they have a perfect match are not independent but likely positively correlated, making these estimates conservative.
\paragraph{IoU} We used a Hoeffding bound to construct a confidence interval for the difference of IoU scores \cite{Wasserman2004}. We find \textsc{rhip} does not provide a statistically significant improvement.

\subsection{Additional results}
\begin{figure}[H]
    \centering
    
    \begin{subfigure}[t]{1.0\textwidth}
        \begin{minipage}[t]{0.3\textwidth}
            \centering
            {\color{white}\textbf{Space}}
                    
            \vspace{0.5em}
             \includegraphics[width=1.0\textwidth]{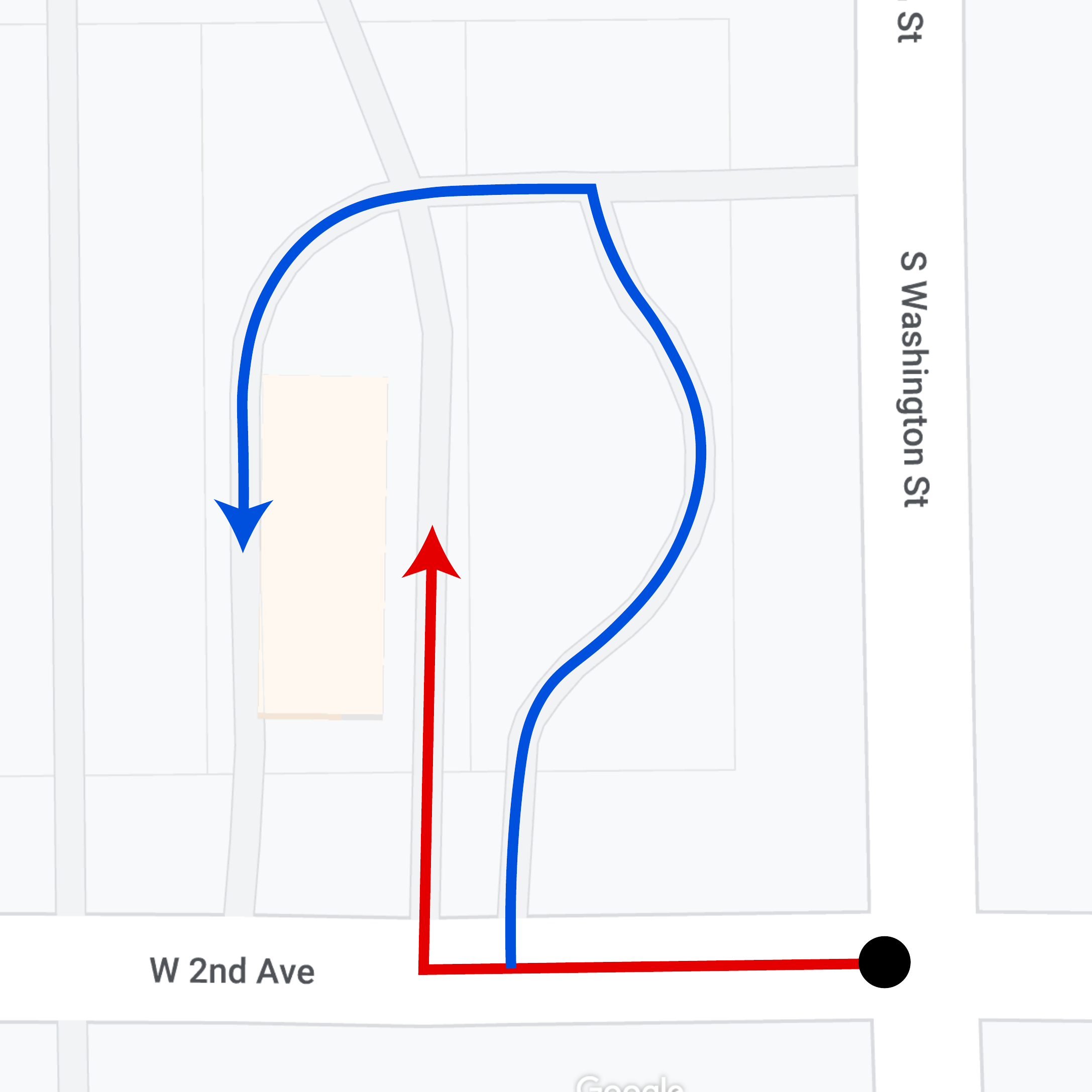}
        \end{minipage}
        \quad
        \begin{minipage}[t]{0.3\textwidth}
            \centering
            {\color{darkblue}\textbf{\underline{Preferred route}}}
            
            \vspace{0.5em}
             \includegraphics[width=1.0\textwidth]{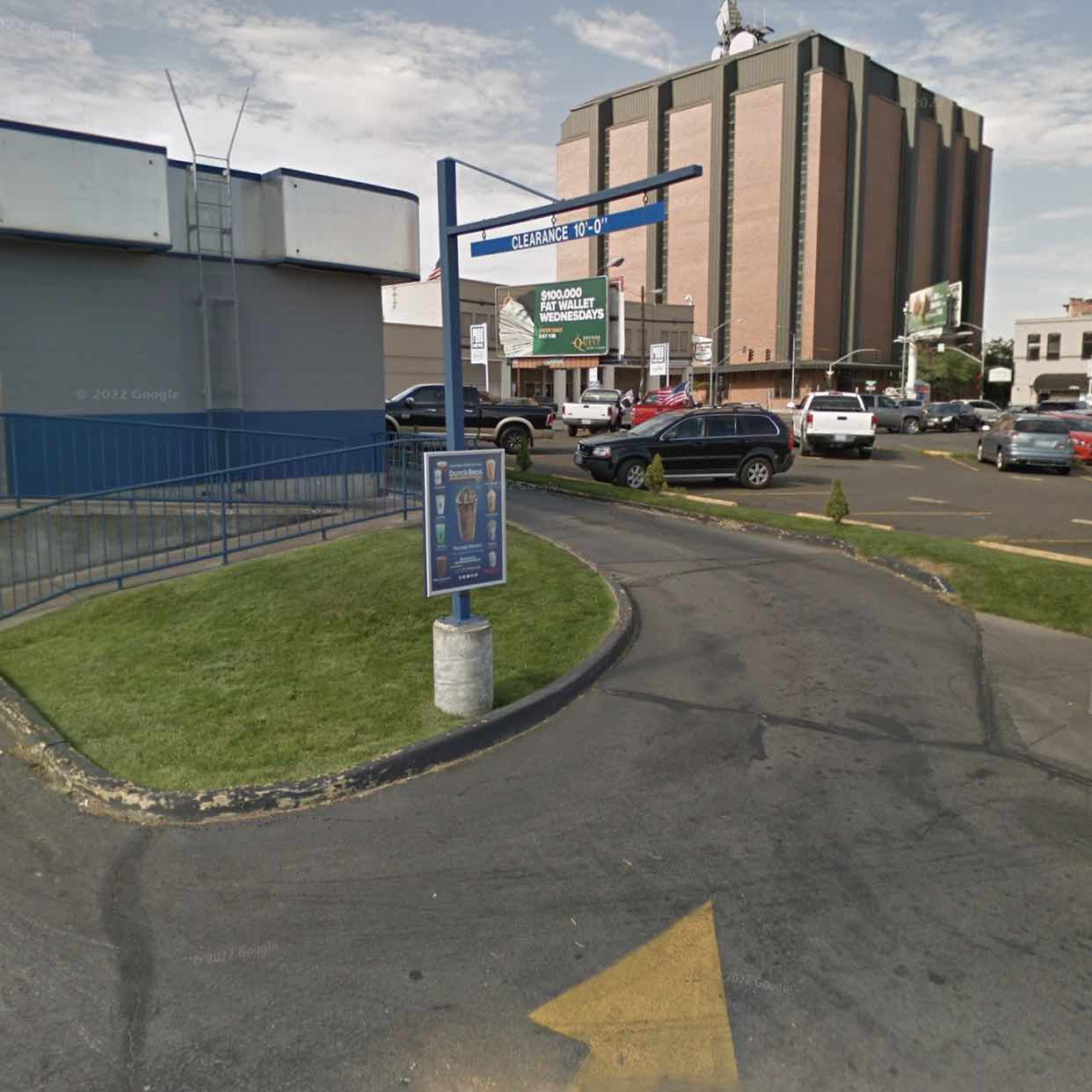}
        \end{minipage}
        \quad
        \begin{minipage}[t]{0.3\textwidth}
            \centering
            {\color{darkred}\underline{\textbf{Bad detour}}}
                    
            \vspace{0.5em}
             \includegraphics[width=1.0\textwidth]{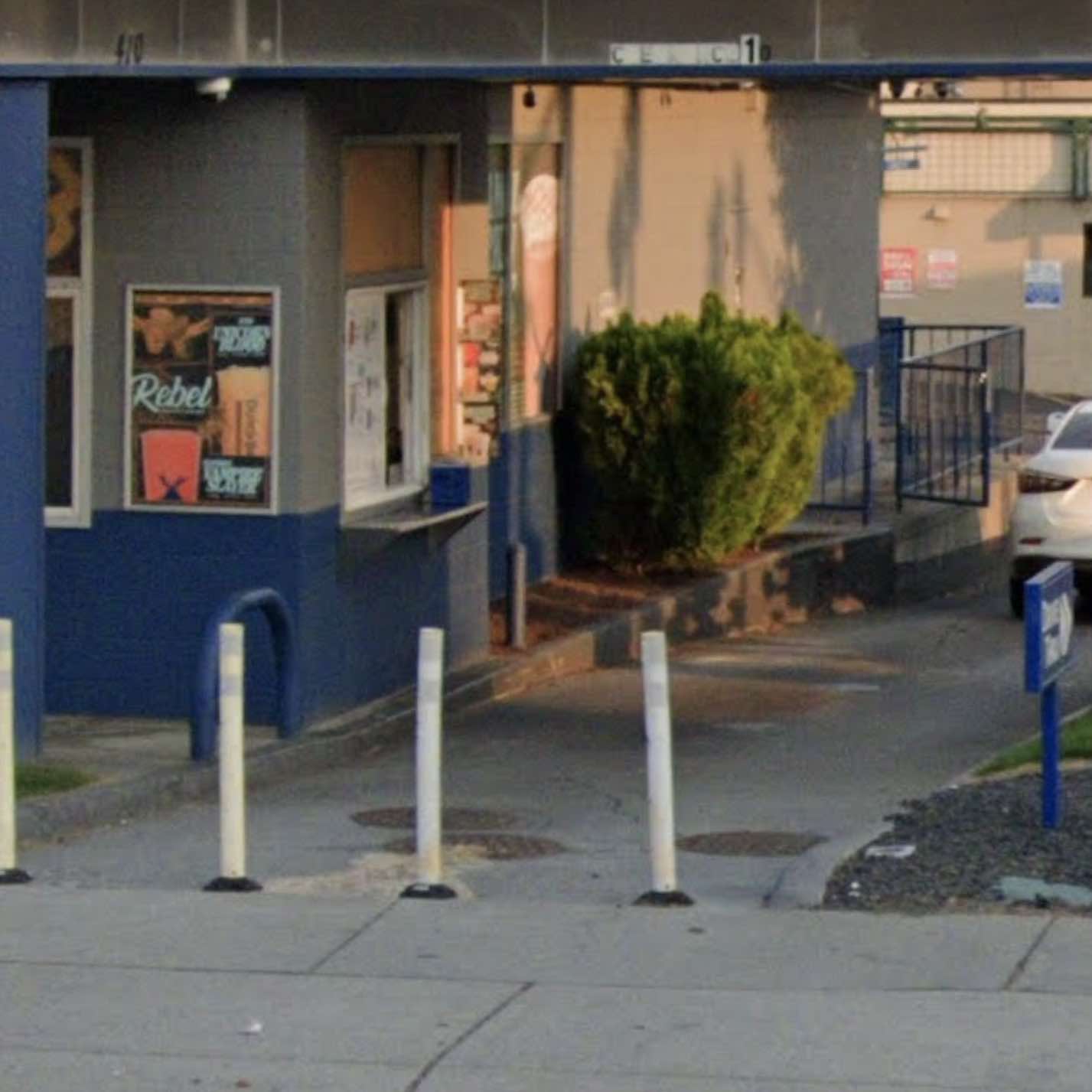}
        \end{minipage}
        \caption{\textbf{Spokane} A {\color{darkred}road} is incorrectly marked as drivable. The {\color{darkblue}correct route} takes users through the designated drive-through, as desired. The sparse model learns to correct the data error with a large negative reward on the {\color{darkred}flex-posts} segment.}
    \end{subfigure}
    
    \vspace{1em}
    \begin{subfigure}[t]{1.0\textwidth}
        \begin{minipage}[t]{0.3\textwidth}
            \centering
            %\phantom{\textbf{Placeholder}}
             \includegraphics[width=1.0\textwidth]{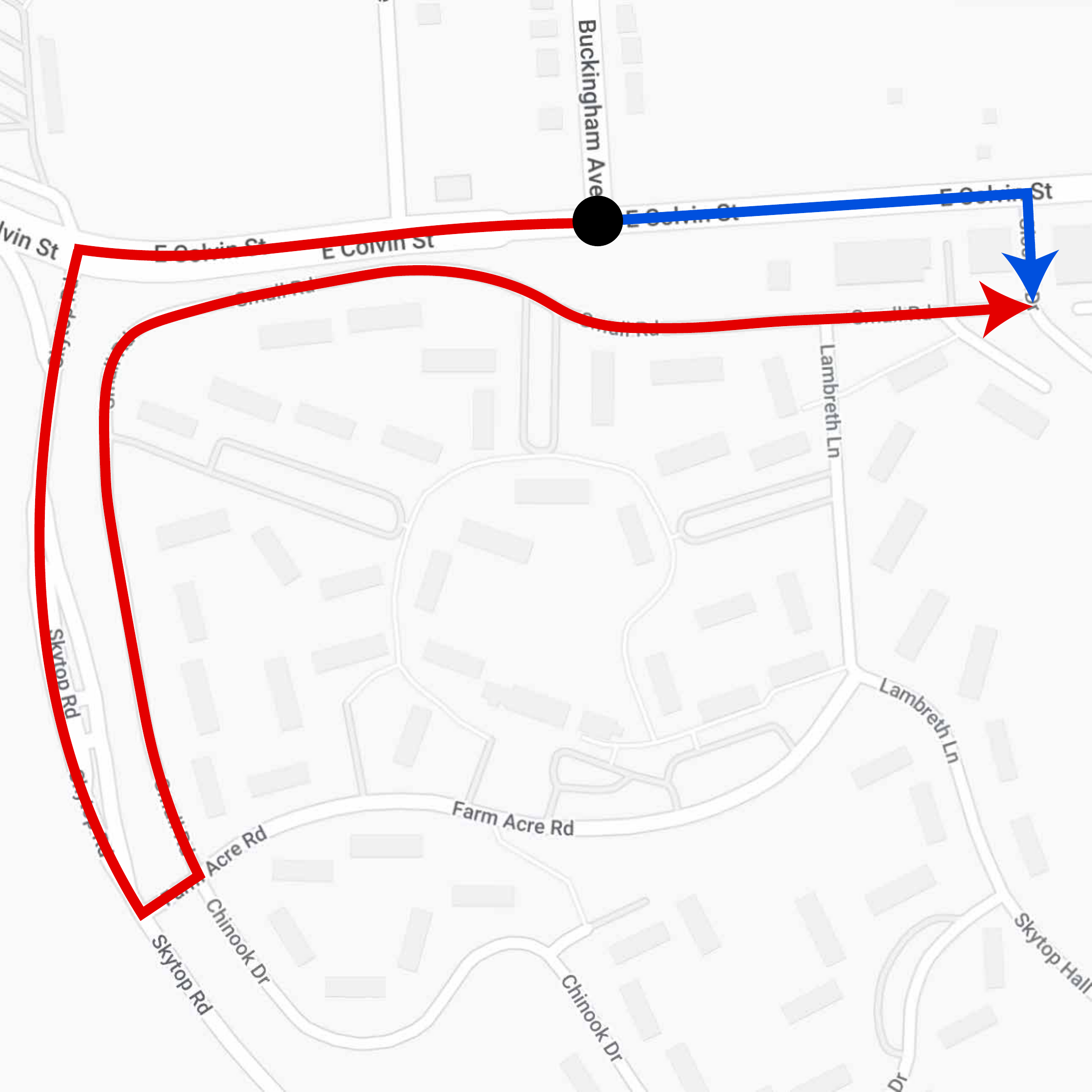}
        \end{minipage}
        \quad
        \begin{minipage}[t]{0.3\textwidth}
            \centering
             \includegraphics[width=1.0\textwidth]{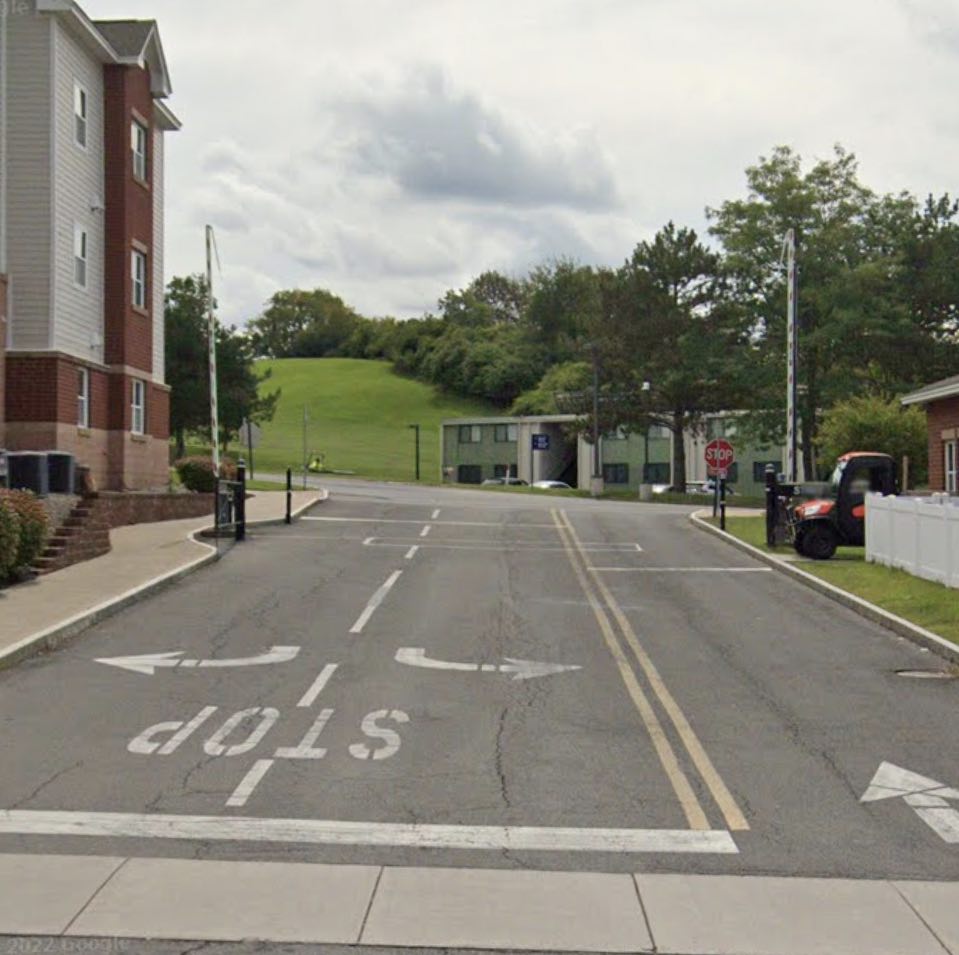}
        \end{minipage}
        \quad
        \begin{minipage}[t]{0.3\textwidth}
            \centering
             \includegraphics[width=1.0\textwidth]{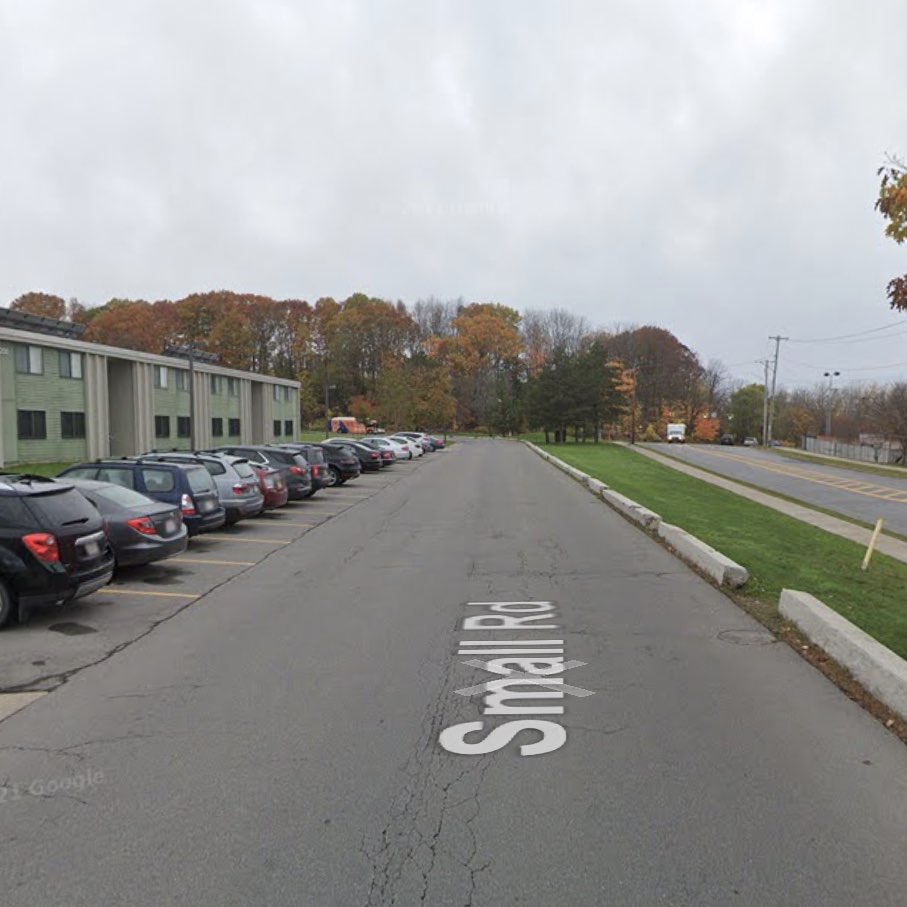}
        \end{minipage}
     \caption{\textbf{Syracuse} Similar to \cref{fig:data-errors}, except this {\color{darkblue}road segment} is \emph{occasionally} closed. The longer {\color{darkred}detour} follows several back roads and parking lots.}
    \end{subfigure}
    \caption{Examples of the 360M parameter sparse model finding and correcting data quality errors. Locations were selected based on the largest sparse reward magnitudes in the respective regions.}
    \label{fig:data-errors-appendix}
\end{figure}

\end{document}